
\documentclass[a4paper,fleqn]{templates/els-cas-templates/cas-dc}



\usepackage[numbers,]{natbib}

\def\tsc#1{\csdef{#1}{\textsc{\lowercase{#1}}\xspace}}
\tsc{WGM}
\tsc{QE}

\usepackage{cleveref}
\usepackage{lipsum}
\usepackage{multicol}
\usepackage[OT1]{fontenc} 
\usepackage{bm}
\usepackage{textcomp}

\crefname{chapter}{Chap.}{Chap.}
\crefname{section}{Sec.}{Sec.}
\crefname{algorithm}{Alg.}{Alg.}
\crefname{table}{Tab.}{Tab.}
\crefname{figure}{Fig.}{Fig.}

\definecolor{airforceblue}{rgb}{0.36, 0.54, 0.66}
\definecolor{arsenic}{rgb}{0.23, 0.27, 0.29}
\hypersetup{
    colorlinks=true,
    linkcolor=arsenic,
    filecolor=magenta,      
    urlcolor=airforceblue,    
    pdfpagemode=FullScreen,
    }


\newtheorem{remark}{Remark}
\newtheorem{theorem}{Theorem}

\newtheorem{assumption}{Assumption}
\newtheorem{problem}{Problem}
\newtheorem{corollary}{Corollary}

\newcommand{\norm}[1]{\left \lVert #1 \right \rVert}

\newproof{proof}{Proof}

\begin{document}
\let\WriteBookmarks\relax
\def\floatpagepagefraction{1}
\def\textpagefraction{.001}

\shorttitle{TBOD: A Framework for Lightweight Sensor Fusion}    

\shortauthors{Oliva et al.}  

\title [mode = title]{Trajectory Based Observer Design: A Framework for Lightweight Sensor Fusion}  

\tnotemark[1]
\tnotetext[1]{This project has received funding from the \textit{Jack Buncher} and was submitted to Control Engineering Practice on the 27th of February 2025. We would like to thank the Biorobotics Lab at CMU for helpful discussions.}

\author[1]{Federico Oliva}[orcid=0000-0003-4694-6339]
\ead{federicoo@campus.technion.ac.il}

\author[1]{Tom Shaked}[orcid=0000-0002-1811-8731]
\ead{shakedtom@campus.technion.ac.il}

\author[2]{Daniele Carnevale}[orcid=0000-0001-6214-7938]
\ead{daniele.carnevale@uniroma2.it}

\author[2]{Amir Degani}[orcid=0000-0002-4813-8506]
\ead{adegani@technion.ac.il}
\cormark[2]
\cortext[2]{Corresponding author}



\affiliation[1]{organization={Civil and Environmental Engineering and Technion Autonomous Systems Program, Technion - Israeli Institute of Technology},
            addressline={Technion City}, 
            city={Haifa},
            postcode={3200003}, 
            country={Israel}}

\affiliation[2]{organization={Department of Civil Engineering and Computer Science, Tor Vergata university of Rome},
            addressline={Via del Politecnico, 1}, 
            city={Roma},
            postcode={00133}, 
            country={Italy}}



\begin{abstract}
Efficient observer design and accurate sensor fusion are key in state estimation. This work proposes an optimization-based methodology, termed \textit{Trajectory Based Optimization Design} (TBOD), allowing the user to easily design observers for general nonlinear systems and multi-sensor setups. Starting from parametrized observer dynamics, the proposed method considers a finite set of pre-recorded measurement trajectories from the nominal plant and exploits them to tune the observer parameters through numerical optimization. This research hinges on the classic observer's theory and Moving Horizon Estimators methodology. Optimization is exploited to ease the observer's design, providing the user with a lightweight, general-purpose sensor fusion methodology. TBOD's main characteristics are the capability to handle general sensors efficiently and in a modular way and, most importantly, its straightforward tuning procedure. The TBOD's performance is tested on a terrestrial rover localization problem, combining IMU and ranging sensors provided by \textit{Ultra Wide Band} antennas, and validated through a motion-capture system. Comparison with an \textit{Extended Kalman Filter} is also provided, matching its position estimation accuracy and significantly improving in the orientation. 
\end{abstract}

\begin{graphicalabstract}
\includegraphics[width=\textwidth,trim={0 0 0 0},clip]{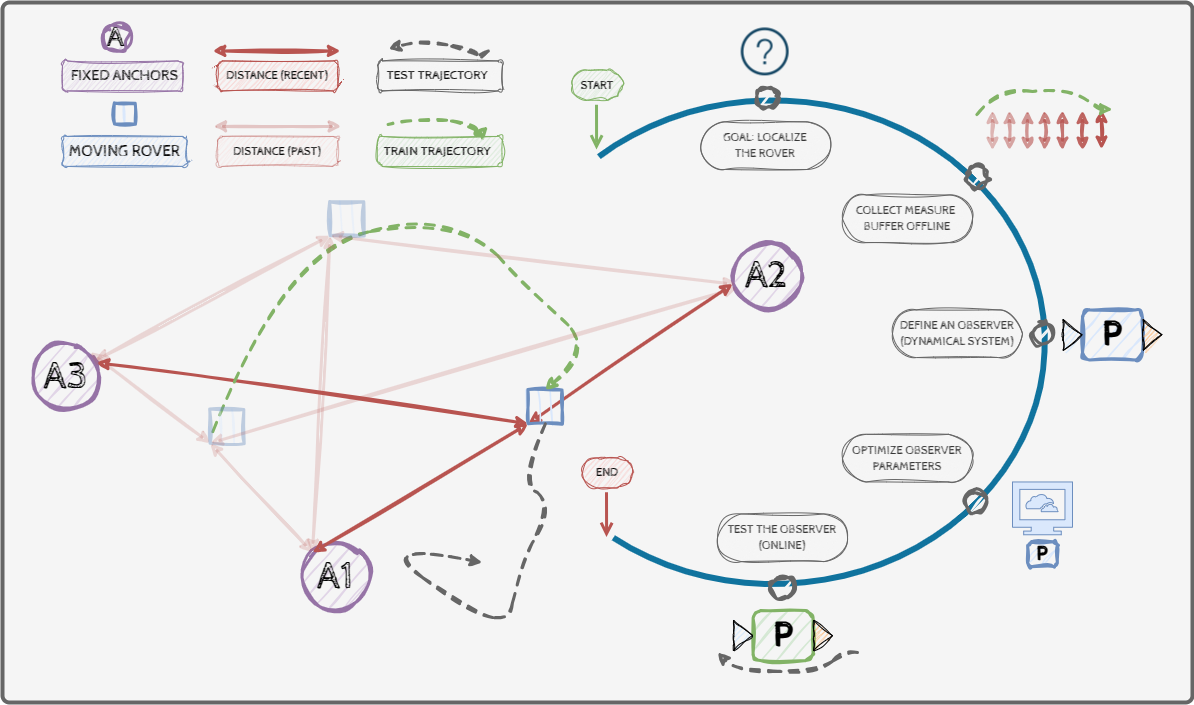}
\end{graphicalabstract}

\begin{highlights}
\item The paper proposes a methodology for designing a general-purpose observer for generic nonlinear systems (TBOD).
\item TBOD uses offline recorded trajectories of the nominal plan to tune the observer's parameters.
\item The paper tested the TBOD on a rover localization problem, provided stability analysis, and compared it with an EKF.
\item The TBOD matched the EKF's precision in position and improved orientation accuracy by an order of magnitude.
\end{highlights}

\begin{keywords}
 Estimation based on sensor data\sep Sensor data fusion\sep Robotics and automation \sep Optimization
\end{keywords}

\maketitle

\setlength\parindent{0pt}

\section{Introduction}
\label{intro}

Observation Theory studies how to design dynamic systems fully reconstructing the state $\bm{\xi}$ of a generic plant $\mathcal{M}$, starting from a limited set of measured signals $\bm{y}$. Fundamental results solving the so-called Observation problem were obtained in the 60s, with the Kalman Filter (KF) and Luenberger observer formulations \cite{Kalman1960,Kalman1961,Luenberger1971}. 

\medskip
The Observation Problem has been constructively solved for linear dynamics, while only for particular classes of systems in the nonlinear context \cite{Bernard2022}. Specifically, KF is the statistically optimal solution for linear systems with Gaussian measurement and process noise \cite{Cox1964}. The KF extension through linearization, i.e., the Extended Kalman Filter (EKF), provides an effective tool for general nonlinear input-affine systems \cite{Keller1987,Besancon1997}. The EKF's properties have been exploited through several methods, such as output injection \cite{Krener1983,Bossane1989,Plestan1997}, linearization  \cite{Krener1985,Boutayeb1997,Glielmo1999,Guay2002,Djuric2003,Hightower2003,Julier2004,Daum2005,Barrau2017}, and differential observability \cite{Gauthier2001}. High-gain observers \cite{Tornambe1989,Deza1992,Khalil2014,Gismondi2022,Gismondi2023} are another relevant class of observers, characterized by an easy tuning process controlled by a single parameter. Observation Theory also started exploiting the growing computational power, introducing optimization in novel design methodologies, like Full Information Estimators (FIE) \cite{Rao2003} and Moving Horizon Estimators (MHE) \cite{Michalska1995,Kang2006,Rakovic2018}. For both of them, stability proofs are available \cite{Wynn2014,Schiller2021,Schiller2022,Schiller2023}. Even though some works move towards real-time solutions \cite{Kuhl2011,Oliva2022,Desai2023}, the computational burden still represents the main bottleneck.

\subsection{Novelty}
\label{intro:novelty}

This work proposes an optimization-based methodology, termed \textit{Trajectory Based Optimization Design} (TBOD), allowing the user to quickly design observers for general nonlinear systems and multi-sensor setups. Starting from a general and parametrized observer structure, a set of trajectories of the nominal plant is considered and exploited to tune the observer through numerical optimization. The main strengths of TBOD are:

\begin{itemize}
    \item \textbf{lightweight}: the entire computational burden is focused on the offline tuning procedure.
    \item \textbf{specificity}: the TBOD method allows fine-tuning observers for specific trajectories occurring in some applications (e.g., satellites orbiting, area patrolling).
    \item \textbf{modularity}: the TBOD method can be easily adapted and scaled up for different sensor setups.
\end{itemize}

\medskip
The basics of TBOD have been introduced in \cite{Oliva2023}, addressing a simple position estimate problem cast as a linear observer design on simulated data. The work also compares the TBOD to a KF and a Linear Matrix Inequality (LMI) design method. The current work generalizes the TBOD for generic nonlinear systems (i.e., a joint position-orientation observer in this case) and provides a deeper analysis with more structured theoretical stability guarantees. Furthermore, the accuracy of the proposed method is validated through experimental data, positioning the TBOD as a valid general-purpose sensor fusion approach.

\subsection{Performance evaluation}
\label{intro:performance}
In this work, the effectiveness of TBOD is shown in a real case study. Specifically, the TBOD method is exploited to design a position and orientation observer on a rover provided with an IMU sensor and ranging measurements from Ultra Wide Band (UWB) antennas\cite{Xia2022,Guler2023,Zhang2024}. Such a scenario was selected due to the rise of interest in navigation for patrolling and surveillance systems, which strongly rely on the agent's localization. For the observer structure, results from the classic Luenberger observer \cite{Luenberger1971} and hybrid systems \cite{Goebel2009} are exploited, as they well describe intermittent measurements \cite{Sferlazza2018,Alonge2019,Berkane2019}. Indeed, EKF represents the standard solution for localization problems \cite{Kaczmarek2022,Kim2020,Armesto2004,Mourikis2007,Armesto2008,Mehra1970,Mohamed1998,Barrau2017}, and it is therefore considered the benchmark for the TBOD performance evaluation, together with another widespread algorithm, i.e., the Particle Filter (PF) \cite{Djuric2003}. 

\subsection{Structure}
\label{intro:struct}
This work unfolds as follows: a general recap of the Observation Problem is presented in \cref{probstat}. The description of the TBOD methodology is provided in \cref{meth}. \Cref{expset} is devoted to showing the proposed approach's performance in the above localization problem. Results are described in \cref{res}, while conclusions are drawn in \cref{concl}.

\section{The Observation Problem}
\label{probstat}

Consider the following continuous-time nonlinear system

\vspace{-5pt}
\begin{equation}    
    \mathcal{M} \ : \
    \begin{cases}
        \dot{\bm{\xi}}  & = \bm{f}(\bm{\xi},\bm{u}) + \bm{w}  \\
        \bm{y}          & = \bm{h}(\bm{\xi},\bm{u}) + \bm{\nu}
    \end{cases}\quad ,
    \label{probstat:eqn:general_ss}
\end{equation}

\medskip
where $(\bm{\xi}, \bm{w}) \in \mathbb{R}^n$ are respectively the state and process noise vectors, $\bm{u} \in \mathbb{R}^m$ the control input, and $(\bm{y},\,\bm{\nu}) \in \mathbb{R}^p$ the output and measurement noise vectors. The $\bm{f}$ map is assumed to be sufficiently regular, and \eqref{probstat:eqn:general_ss} is referred to as the nominal plant $\mathcal{M}$, where the time dependence is neglected to ease the notation. Consider now the problem of reconstructing the state vector $\bm{\xi}$ from the measurements $\bm{y}$. The classic approach in observation theory consists of designing a dynamical system $\hat{\mathcal{M}}$, whose state $\hat{\bm{\xi}}$ will eventually converge to $\bm{\xi}$ starting from any initial condition $\hat{\bm{\xi}}_0$. Thus, the observer's dynamics are

\begin{equation}
    \hat{\mathcal{M}} \ : \
    \begin{cases}
        \dot{\hat{\bm{\xi}}}  & = \hat{\bm{f}}(\hat{\bm{\xi}},\bm{u},\bm{y})  \\
        \hat{\bm{y}}          & = \hat{\bm{h}}(\hat{\bm{\xi}},\bm{u})
    \end{cases} \quad ,
    \label{probstat:eqn:observer_ss}
\end{equation}

\medskip
where $\hat{\bm{\xi}} \in \mathbb{R}^n$ is the observer state vector, $\bm{u} \in \mathbb{R}^m$ and $\bm{y} \in \mathbb{R}^p$ are the control input and output measurements as in $\mathcal{M}$, respectively. Thus, the Observation Problem is defined:

\medskip
\begin{problem}[Observation Problem]
    \label{probstat:def:obs_problem}
        Consider the nominal plant $\mathcal{M}$ in  \eqref{probstat:eqn:general_ss} and the observer $\hat{\mathcal{M}}$ in  \eqref{probstat:eqn:observer_ss}. The Observation Problem consists of finding a suitable set of dynamics and output mapping $(\hat{\bm{f}},\hat{\bm{h}})$ such that the estimation error system $\bm{e} = \bm{\xi}-\hat{\bm{\xi}}$ has a globally uniformly asymptotic practically stable origin, i.e. for any initial condition of $\mathcal{M}$ and $\hat{\mathcal{M}}$, $\bm{e}$ remains bounded and the following holds:

    \begin{equation}
        \label{probstat:eqn:observer_convergence}
        \underset{t\rightarrow +\infty}{\lim} \norm{\bm{\xi} - \hat{\bm{\xi}}} = \underset{t\rightarrow +\infty}{\lim} \norm{\bm{e}} \leq \epsilon, \quad \epsilon \in \mathbb{R}_{\geq 0} \ , 
    \end{equation}
\end{problem}

\section{The TBOD method}
\label{meth}

\subsection{The observer's structure} 
\label{meth:TBOD}

Generally speaking, the observer dynamics can be split into two main terms \cite{Luenberger1971,Thrun2006}: 

\begin{itemize}
    \item \textbf{plant replica term:} describes the system evolution when the Observation Problem is solved. This term is referred to as $\hat{\bm{f}}_R(\hat{\bm{\xi}},\bm{u},\bm{\Gamma)}$, where $\bm{\Gamma}$ is a design parameters vector.
    \item \textbf{injection term:} describes the dynamics depending on the output mismatch $(\bm{y} - \hat{\bm{y}})$. It vanishes when the Observation Problem \ref{probstat:def:obs_problem} is solved. This term is referred to as $\hat{\bm{f}}_I(\hat{\bm{\xi}},\bm{y},\bm{\Delta)}$, and $\bm{\Delta}$ is a design parameters vector.
\end{itemize}

\medskip
The observer's output mapping is referred to as $\hat{\bm{h}}(\hat{\bm{\xi}},\bm{u})$ and depends on the sensor setup considered. Thus, the generic parametric equations for an observer are:

\begin{equation}
    \label{meth:eqn:parametricObserver}
    \hat{\mathcal{M}} \ : \ 
    \begin{cases}
        \dot{\hat{\bm{\xi}}} &= \hat{\bm{f}}_R(\hat{\bm{\xi}},\bm{u},\bm{\Gamma)} + \hat{\bm{f}}_I(\hat{\bm{\xi}},\bm{y},\bm{\Delta)} \\
        \hat{\bm{y}} &= \hat{\bm{h}}(\hat{\bm{\xi}},\bm{u})
    \end{cases} \quad .
\end{equation}

\subsection{The optimization problem} 
\label{meth:optprob}
Generally speaking, the design of an observer consists in structuring the dynamics in  \eqref{meth:eqn:parametricObserver}, and finding the best values for $(\bm{\Gamma},\bm{\Delta})$ solving Problem \ref{probstat:def:obs_problem}. The TBOD method simplifies the design procedure using an optimization-based numerical approach to tune the observer parameters.

\medskip
The TBOD method starts from a general structure of the observer consisting of the \textit{plant replica term} $\hat{\bm{f}}_R(\hat{\bm{\xi}},\bm{u},\bm{\Gamma})$ and \textit{injection term} $\hat{\bm{f}}_I(\hat{\bm{\xi}},\bm{y},\bm{\Delta})$, and solves the design through an optimization problem on $(\bm{\Gamma},\bm{\Delta})$. The TBOD method considers a set of $M$ different trajectories $(\bm{\xi}_j,\hat{\bm{\xi}}_j)$ for $(\mathcal{M},\hat{\mathcal{M}})$ within a time interval $[0, \ T]$, and $j\in\{1,\dots,M\}$. The related estimation error trajectories are defined as $\bm{e}_j \triangleq \bm{\xi}_j-\hat{\bm{\xi}}_j$, while the output mismatch for each trajectory is defined as $\bm{\delta}_{y,j} \triangleq \bm{y}_j-\hat{\bm{y}}_j$. From  \eqref{meth:eqn:parametricObserver}, both $\bm{e}_j$ and $\bm{\delta}_{y,j}$ depend on the parameters $(\bm{\Gamma},\bm{\Delta})$. Thus, the design process is formalized as an optimization problem
\begin{equation}
	(\overline{\bm{\Gamma}},\overline{\bm{\Delta}}) =
	\begin{aligned}
		 & \underset{(\bm{\Gamma},\bm{\Delta})}{\text{argmin}}
		 &                             & \sum\limits_{j=1}^M\int\limits_0^T\bm{\delta}_{y,j}(t)^T\text{W}\bm{\delta}_{y,j}(t)dt
	\end{aligned} \ ,
	\label{meth:eqn:TBOD_prob_def}
\end{equation}
where $\text{W}\in\mathbb{R}^{p\times p}$ is a scaling matrix whose role is to normalize all the output mismatch terms. 
The design takes into account measurement noise, as it relies on the actual error trajectories $\bm{e}_j$, affecting the output mismatch $(\bm{y}_{j}(t) - \hat{\bm{y}}_j(t))$. Clearly,  \eqref{meth:eqn:TBOD_prob_def} defines a continuous-time problem because it considers the integral of the output mismatch. However, in the actual implementation, $\norm{\bm{y}_{j}(t) - \hat{\bm{y}}_j(t)}$ is available at specific samples, depending on the sensors' sampling. Thus, \eqref{meth:eqn:TBOD_prob_def} is eventually implemented in discrete-time form, i.e.,

\small
\begin{equation}
	(\overline{\bm{\Gamma}},\overline{\bm{\Delta}}) =
	\begin{aligned}
		 & \underset{(\bm{\Gamma},\bm{\Delta})}{\text{argmin}}
		 &                             & \sum\limits_{j=1}^M\sum\limits_{p=1}^P\bm{\delta}_{y,j}(t_{p,j})^T\text{W}\bm{\delta}_{y,j}(t_{p,j})
	\end{aligned},
	\label{meth:eqn:TBOD_prob_def_dist}
\end{equation}
\normalsize

where the sampling frequency is defined for each trajectory as $F_{Hz,j} = T_{s_{p,j}}^{-1}$, with $T_{s_{p,j}} \triangleq t_{p+1,j} - t_{p,j}$.

\subsection{Comments}
\label{meth:comments}

The design of TBOD is lightweight and practical: it involves defining injection terms and assigning tunable gains, a process that is simple and does not require system-specific derivations. The hybrid observer structure used in the proposed case study already accounts for multi-rate sensor inputs, illustrating how the framework naturally accommodates real-world setups. Users are free to adopt more complex or tailored observer structures from the literature and simply tune the associated parameters. In the following two paragraphs, an additional clarification on the role of the training trajectories and the sensor setup used during optimization is provided.

\smallskip
\paragraph*{Selection of the training trajectories:}
A first comment regards the choice of reference trajectories, which impacts how well the observer parameters generalize to unseen scenarios. While this is sometimes related to concepts such as Persistent Excitation (PE) in system identification \cite{Stoica1993}, the goal here is not to identify the system but to synthesize observer parameters that minimize the estimation error over a representative set of trajectories. Rich trajectories help expose the estimation error dynamics across a wide range of operating conditions, enabling the optimization process to tune the parameters more effectively. Therefore, the selection of training trajectories is critical not for system identifiability, but for ensuring good observer performance across the desired deployment envelope. In this regard, two aspects stand out:

\begin{enumerate}    
    \item \textbf{robustness}: the TBOD becomes more robust as the number of trajectories increases. The set of $M$ trajectories considered in \eqref{meth:eqn:TBOD_prob_def_dist}-\eqref{meth:eqn:TBOD_prob_def_dist_groundtruth} can be seen as a \textit{training set}, which will be referred to as $\mathcal{T}$.
    \item \textbf{specificity}: the more specific the trajectories in $\mathcal{T}$, the more the observer's parameters will fit a particular scenario, e.g., patrolling, palletting, etc.
\end{enumerate}

\medskip
A tradeoff between robustness, specificity, and feasibility should be considered. As shown by the proposed case study in \cref{expset}, this gray area between specificity and general purpose allows the TBOD method to improve the estimation performance compared to more standard observers.

\paragraph*{Considerations on the training set's noise:}
In the TBOD optimization problem, the cost function depends on the system output $\bm{y}$ and its estimate $\hat{\bm{y}}$. In the observer's tuning process, the trajectories of the training set $\mathcal{T}$ are used to compute this cost and are assumed to be available offline.

\medskip
Usually, such a set of reference trajectories is measured in a controlled environment, such as a lab or a dedicated facility, where sensors measuring the system state with high precision might be available. For instance, motion capture systems, such as the \href{https://www.vicon.com/}{Vicon\texttrademark} \cite{Vicon}, can measure the position of a rigid body with precision in the order of millimeters. These tools are usually referred to as \textit{ground truth} sensors, and if available, they can provide the training set $\mathcal{T}$ of reference trajectories for the TBOD method. In this case, the optimization problem in  \eqref{meth:eqn:TBOD_prob_def_dist} can be substituted by

\begin{equation}
    (\overline{\bm{\Gamma}},\overline{\bm{\Delta}}) = 
        \begin{aligned}
            & \underset{(\bm{\Gamma},\bm{\Delta})}{\text{argmin}} \sum\limits_{j=1}^M\sum\limits_{p=1}^P\bm{e}_j(t_p)^T\text{W}\bm{e}_j(t_p).
        \end{aligned}
    \label{meth:eqn:TBOD_prob_def_dist_groundtruth}
\end{equation}

When such high-precision reference trajectories (i.e., ground truth) are available, they can be directly used to measure the true plant state. This allows replacing the output-mismatch-based cost (i.e., $\propto \bm{\delta}_{y,j}$) in \eqref{meth:eqn:TBOD_prob_def_dist} with the state error-based cost (i.e., $\propto \bm{e}_{j}$) in \eqref{meth:eqn:TBOD_prob_def_dist_groundtruth}, often yielding improved performance.

\medskip
However, the use of ground-truth data is not a requirement of the TBOD method but an enhancement. In systems where such measurements are expensive or unavailable, TBOD can still be applied by optimizing \eqref{meth:eqn:TBOD_prob_def_dist} to tune the observer, which only requires access to system outputs. While this may reduce performance due to sensor noise, it preserves the applicability of TBOD to a broader class of systems, including those where only onboard sensors are available.

\medskip
Importantly, once the optimal parameters $(\bm{\overline{\Gamma}}, \bm{\overline{\Delta}})$
are obtained offline by solving either \eqref{meth:eqn:TBOD_prob_def_dist} or \eqref{meth:eqn:TBOD_prob_def_dist_groundtruth}, the observer defined in equation \eqref{meth:eqn:parametricObserver} can be implemented online without requiring any further optimization. This ensures that the method remains practical for real-time applications, even when trained with limited or noisy data.

\medskip
To conclude, it should be further remarked that the TBOD method can straightforwardly adapt to very different sensor setups, with no need to change the structure of \eqref{meth:eqn:TBOD_prob_def_dist}-\eqref{meth:eqn:TBOD_prob_def_dist_groundtruth}. This is not the case, for instance, for standard KF observers, where the output mapping directly affects the injection term. 

\section{Case study}
\label{expset}

As a case study, the TBOD is here used to design a position and orientation observer on a rover. The rover's sensor setup comprises an IMU and three UWB antennas providing range measurements to four fixed anchors. The IMU sensor provides acceleration and angular velocity measurements. The three UWB tags, sensing ranges to four fixed anchors with known locations, are used to compute the position and orientation of the rover. Indeed, three tags have been deployed because a rigid body in three-dimensional space can be fully localized if at least three non-collinear points on the body are known \cite{Siciliano2016}.  The setup also includes six Vicon T10S cameras that track the rover location with an order of accuracy of millimeters. Such a ground truth system should not be considered part of the sensor's setup, but only as a validation for the experiment's accuracy.

\subsection{Observer design}
\label{expset:design}
The nominal plant dynamics are modeled as follows

\small
\begin{subequations}
    \begin{align}
	&\mathcal{P} \ : \
	\begin{cases}
		\dot{\bm{p}} & = \bm{v}          \\
		\dot{\bm{v}} & = \mathcal{R}_{\mathcal{L}}^{\mathcal{G}}\bm{u}_a = \bm{a}\\
		\dot{\bm{b}} & = 0 \\
        \dot{\bm{q}} & = \dfrac{1}{2}\bm{\Omega}(\bm{\omega})\bm{q}       \\
		\dot{\bm{\omega}} & = \mathcal{R}_{\mathcal{L}}^{\mathcal{G}}\bm{u}_{\omega}\\
	\end{cases}
	\rightarrow
    \begin{cases}
		\dot{\bm{\xi}}_p & = \quad
        \begin{bmatrix}
            \bm{v} \\            \mathcal{R}_{\mathcal{L}}^{\mathcal{G}}\bm{u}_a \\
            0
        \end{bmatrix}\\
        \dot{\bm{q}} & = \dfrac{1}{2}\bm{\Omega}(\bm{\omega})\bm{q}       \\
		\dot{\bm{\omega}} & = \mathcal{R}_{\mathcal{L}}^{\mathcal{G}}\bm{u}_{\omega}\\
	\end{cases} \quad ,
	 \label{expset:design:eqn:plant_dyn}\\ \notag \\
 	& \hspace{30pt} \bm{y} =
	\begin{bmatrix}
		\bm{y}_d \\
		\bm{y}_a \\
        \bm{y}_\omega 
	\end{bmatrix}
	=
	\begin{bmatrix}
		\bm{d}^{\star} \\
		\bm{u}_a \\
        \bm{\omega}
	\end{bmatrix}
	+
	\begin{bmatrix}
		\bm{\nu}_d \\
		\bm{b} + \bm{\nu}_a \\
        \bm{\nu}_\omega
	\end{bmatrix} \ ,
	\label{expset:design:eqn:plant_dyn_output}
\end{align}
\end{subequations}
\normalsize

\medskip
where $\bm{\xi} = [\bm{p}^T \ \bm{v}^T \ \bm{b}^T \ \bm{q}^T \ \bm{\omega}]^T \in\mathbb{R}^{16}$ is the plant state vector, $(\bm{p},\bm{v})\in\mathbb{R}^3$ are the rover position and velocity, $(\bm{u}_a,\bm{u}_\omega) \in\mathbb{R}^3$ are the unknown inputs of the plant, $\bm{q}\in\mathbb{R}^4$ and $\bm{\omega}\in\mathbb{R}^3$ are the orientation quaternion and the angular velocity, and  $\bm{b}\in\mathbb{R}^3$ is a measurement bias on acceleration. No bias was considered in the angular velocity measurements. Rotational dynamics are expressed through quaternions \cite{Challa2016} with

\begin{equation}
    \bm{\Omega}(\bm{\omega}) = 
    \begin{bmatrix}
        0 & -\omega_z & \omega_y & \omega_x \\
        \omega_z & 0 & -\omega_x & \omega_y \\
        -\omega_y & \omega_x & 0 & \omega_z \\
        -\omega_x & -\omega_y & -\omega_z & 0
    \end{bmatrix} \ .
    \label{expset:design:omegamat}
\end{equation} 

\smallskip
The translational dynamics are wrapped in the substate $\bm{\xi}_p = [\bm{p}^T \ \bm{v}^T \ \bm{b}^T]^T \in \mathbb{R}^9$. Two reference frames are considered: a local frame $\mathcal{L}$ attached to the rover and a global frame $\mathcal{G}$. The rotation matrix between the two is $\mathcal{R}_{\mathcal{L}}^{\mathcal{G}}$. The state is expressed in the global frame, and the input vectors are all expressed in the local frame, i.e., $\bm{\xi} = \bm{\xi}^\mathcal{G}$, and $(\bm{u}_a,\bm{u}_\omega) = (\bm{u}_a^\mathcal{L},\bm{u}_\omega^\mathcal{L})$. The following considerations hold for this case study, with the orientation expressed in \textit{Roll, Pitch, Yaw} angles:

\begin{itemize}
    \item \textbf{Orientation control on Yaw only}: only the rotation around the z-axis is considered controllable, imposing the \textit{Yaw} angle, i.e., the user only controls $u_{\omega,3}$ in $\bm{u}_\omega$.
    \item \textbf{Terrain constraints}: \textit{Roll} and \textit{Pitch} angles are imposed by the reaction forces provided by the terrain asperities to the rover. Thus, $(u_{\omega,1},u_{\omega,2})$ in $\bm{u}$ are caused by the environment.
    \item \textbf{Torques independent on orientation}: in this scenario, external forces and torques acting on the body are not dependent on the body's orientation.
    \item \textbf{Low Angular Acceleration}: the body does not experience significant angular forces or torques due to the reduced operational speed $(\sim 1 \text{ m/s})$.
\end{itemize}

As for the system output, $\bm{y}\in\mathbb{R}^{3N+6}$ is the vector of plant measurements consisting of the acceleration and angular velocity signals $(\bm{y}_a,\bm{y}_\omega)\in\mathbb{R}^3$ provided by an IMU, and by the vector of ranging measurements $\bm{d}\in\mathbb{R}^{3N}$, introduced later. The IMU measurements $(\bm{y}_a,\bm{y}_\omega)$ are considered in the local frame $\mathcal{L}$, i.e., $(\bm{y}_a, \bm{y}_\omega) = (\bm{y}_a^{\mathcal{L}},\bm{y}_\omega^{\mathcal{L}})$. In the observer design, these quantities will be needed in the global frame $\mathcal{G}$, and thus the transformed vectors will be referred to as $(\bm{y}_a^{\mathcal{G}},\bm{y}_\omega^{\mathcal{G}})$. The transformation is computed by exploiting the rotation matrix $\mathcal{R}_\mathcal{L}^\mathcal{G}$ between the two frames, namely, $\bm{y}_a^{\mathcal{G}} = \mathcal{R}_\mathcal{L}^\mathcal{G}\bm{y}_a^{\mathcal{L}}$. The rotation matrix $\mathcal{R}_\mathcal{L}^\mathcal{G}$ computation will be addressed later in the paper. We also consider a set of ranging measurements $\bm{d} = [d_{1,z} \ \dots \ d_{N,z}]^T \in\mathbb{R}^{3N}$ provided by $N$ fixed UWB anchors at known locations, and three UWB tags installed on the agent. Each tag is identified with the index $z\in\{1,2,3\}$. Lastly, $\bm{\nu}_d\in\mathbb{R}^{3N}$, and $ (\bm{\nu}_a,\bm{\nu}_\omega)\in\mathbb{R}^3$ are the measurement noises of the UWB and the IMU. Ranging measurements are defined as $d_{i,z}^{\star} = \norm{\bm{p}_{A,i}-\bm{p}_{T,z}}$, where $\bm{p}_{A,i}\in\mathbb{R}^3$ is the known absolute position of the $i$-th anchor and $\bm{p}_{T,z}$ is the position of the z-th tag installed on the object.

\begin{assumption}[Tag positioning]
    The three UWB tags are placed on the rover such that their barycenter $\mathbb{P}\in\mathbb{R}^3$ coincides with the barycenter of the rigid body $\mathbb{G}\in\mathbb{R}^3$, namely it holds:

    \vspace{-10pt}
    \begin{equation}
        \mathbb{P}\triangleq \dfrac{1}{3}\sum\limits_{z=1}^3 \bm{p}_{T,z} \equiv \mathbb{G} \in \mathbb{R}^3 \ .
        \label{expset:design:eqn:barycenter}
    \end{equation}
    
    \label{expset:design:ass:tagpositioning}
\end{assumption}

\vspace{-10pt}
A further comment should be made on the output $\bm{y}$ sampling. In the scenario considered, $\bm{y}$ is sampled by a multi-rate acquisition hardware, that is, IMU and UWB measurements are available with different rates, $\Delta t_a$ and $\Delta t_d$, respectively defined as

\vspace{-10pt}
\begin{subequations}
	\begin{align}        
		t_{\bm{a},q}-t_{\bm{a},q-1} & = \Delta t_a \ , \\
		t_{\bm{d},l}-t_{\bm{d},l-1} & = \Delta t_d \ ,
	\end{align}
    \label{expset:design:eqn:samptimes}
\end{subequations}
\vspace{-10pt}

where $(q,l)\in\mathbb{N}$, $\Delta t_a \leq \Delta t_d$ and 
$ \Delta t_d = \bar{h} \Delta t_a$, with $\bar{h}\in\mathbb{N}_{0} \gg 1$. Indeed, the IMU acquisition can be considered continuous compared to the UWB. Lastly, as from \cite{Challa2016}, quaternions have to meet the unitary constraint. Thus, after any operation on quaternions, they are normalized.

\subsection{The plant as a hybrid system} 
\label{expset:hyb}
Due to the IMU and UWB multi-rate sampling, the plant can be considered as a hybrid system. Hybrid systems are those systems whose dynamics evolve as a pair of continuous and discrete time mappings, namely an \textit{Ordinary Differential Equation} and a \textit{Difference Equation} \cite{Goebel2009}. Every instant, only one of these mappings is responsible for the state trajectory of the system. The switch between the two is defined by the state belonging to one of two specific manifolds, i.e., the continuous or discrete time domains. Thus,  \eqref{expset:design:eqn:plant_dyn} becomes

\vspace{-5pt}
\small
\begin{subequations}
	\begin{equation}
		\mathcal{P}_h :
		\begin{aligned}
			  &
			\begin{cases}
				\dot{\bm{p}} & = \bm{v} \\ \dot{\bm{v}} & = \mathcal{R}_{\mathcal{L}}^{\mathcal{G}}\bm{u}_a \\
		\dot{\bm{b}} & = 0 \\
                    \dot{\bm{q}} & = \dfrac{1}{2}\bm{\Omega}(\bm{\omega})\bm{q}       \\
		          \dot{\bm{\omega}} & = \mathcal{R}_{\mathcal{L}}^{\mathcal{G}}\bm{u}_{\omega} \\
				\dot{\tau}   & = 1
			\end{cases}
			& \ (\bm{\xi},\tau)\in\mathcal{C} \ , \\
			  &
			\begin{cases}
				\bm{\xi}^+ & = \bm{\xi} \\
				\tau^+   & = 0
			\end{cases}
			& \ (\bm{\xi},\tau)\in\mathcal{D} \ ,
		\end{aligned}
	\end{equation}	
        \begin{equation}
            \bm{y} : \hspace{10pt}
            \begin{aligned}
                &
                \begin{cases} \quad                               
                \begin{bmatrix}
                    \bm{y}_a \\
                    \bm{y}_{\omega} 
                \end{bmatrix}
                \end{cases}
                \hspace{30pt}(\bm{\xi},\tau)\in\mathcal{C} \ , \\            
                &
                \begin{cases} \quad                            
                \begin{bmatrix}
                    \bm{y}_d \\
                    \bm{y}_a \\
                    \bm{y}_{\omega} 
                \end{bmatrix}
                \end{cases}
                \hspace{30pt} (\bm{\xi},\tau)\in\mathcal{D} \ ,
            \end{aligned}
        \end{equation}
	\label{expset:design:eqn:plant_dyn_hybrid}
\end{subequations}
\normalsize

\medskip
where sets $(\mathcal{C},\mathcal{D})\subset\mathcal{W}\times\mathbb{R}$ represent the state trajectory continuous and discrete domains, respectively. The $\tau$ variable plays the role of a virtual counter keeping track of the jumps occurring in the hybrid system state trajectory~\cite{Carnevale2007,Ferrante2016}. In the considered case, $\tau$ is reset every $\Delta t_d$ seconds.  Here, note that the jump map does not change the dynamics; only the continuous-time mapping is responsible for the system evolution. The jump map here is only used to describe the different output vectors. In hybrid systems, the state is identified by both a time instant $t$ and a jump number $j$, i.e., $\bm{\xi}(t,j) = \bm{\xi}(t_j) = \bm{\xi}_j$.

\subsection{The hybrid observer}
\label{expset:hybobs}
The hybrid system formalism is pivotal to understanding the observer dynamics.  Specifically, the proposed observer extends the one used in \cite{Sferlazza2018,Oliva2023} also to consider rotational dynamics. The observer dynamics are 

\vspace{5pt}
\footnotesize
\begin{equation}
    \hat{\mathcal{P}}_h :
    \begin{aligned}
          &
        \begin{cases}
            \dot{\hat{\bm{p}}} & = \hat{\bm{v}} \\
            \dot{\hat{\bm{v}}} & = \hat{\bm{a}} - \hat{\bm{b}} \\
            \dot{\hat{\bm{b}}} & = 0 \\
            \dot{\hat{\bm{a}}} & = \alpha(\bm{y}_a^{\mathcal{G}} - \hat{\bm{a}}) \\
            \dot{\hat{\bm{q}}} & = \dfrac{1}{2}\bm{\Omega}(\hat{\bm{\omega}})\hat{\bm{q}}  \\
            \dot{\hat{\bm{\omega}}} & = \alpha(\bm{y}_{\omega}^{\mathcal{G}} - \hat{\bm{\omega}}) \\
            \dot{\tau}       & = 1
        \end{cases}
        \hspace{40pt} \rightarrow
        \begin{cases}
    		\dot{\hat{\bm{\xi}}}_p & = \quad \begin{bmatrix}
    		    \hat{\bm{v}} \\
                \hat{\bm{a}} - \hat{\bm{b}} \\
                0 \\
                \alpha(\bm{y}_a^{\mathcal{G}} - \hat{\bm{a}})
    		\end{bmatrix}        \\
            \dot{\hat{\bm{q}}} & = \dfrac{1}{2}\bm{\Omega}(\hat{\bm{\omega}})\hat{\bm{q}}  \\
            \dot{\hat{\bm{\omega}}} & = \alpha(\bm{y}_{\omega}^{\mathcal{G}} - \hat{\bm{\omega}}) \\
    	\end{cases}
        \\ & \hspace{150pt} (\hat{\bm{\xi}},\tau)\in\hat{\mathcal{C}} \ , \\ \\
          &
        \begin{cases}
            \hat{\bm{p}}^+ & = \hat{\bm{p}} + \text{\textbf{K}}_1(g(\bm{y}_d)-\hat{\bm{p}}) \\
            \hat{\bm{v}}^+ & = \hat{\bm{v}} + \text{\textbf{K}}_2(g(\bm{y}_d)-\hat{\bm{p}}) \\
            \hat{\bm{b}}^+ & = \hat{\bm{b}} + \text{\textbf{K}}_3(g(\bm{y}_d)-\hat{\bm{p}}) \\
            \hat{\bm{a}}^+ & = \hat{\bm{a}} \\
            \hat{\bm{q}}^+ & =  \hat{\bm{q}} + \rho(\hat{\bm{q}},\bm{y}_d,\text{\textbf{K}}_4) \\
            \hat{\bm{\omega}}^+ & = \hat{\bm{\omega}} \\
            \tau^+       & = 0
        \end{cases}
        \rightarrow
        \begin{cases}
    		\hat{\bm{\xi}}_p^+ & = \hat{\bm{\xi}}_p + \tilde{\text{\textbf{K}}}(g(\bm{y}_d) - \hat{\bm{p}})         \\
            \hat{\bm{q}}^+ & =  \hat{\bm{q}} + \rho(\hat{\bm{q}},\bm{y}_d,\text{\textbf{K}}_4) \\
            \hat{\bm{\omega}}^+ & = \hat{\bm{\omega}} \\
    	\end{cases}
        \\ & \hspace{150pt}(\hat{\bm{\xi}},\tau)\in\hat{\mathcal{D}} \ ,
    \end{aligned}
\label{expset:design:eqn:obs_dyn_hybrid}
\end{equation}
\normalsize

\medskip
where $\hat{\bm{\xi}}_p = [\hat{\bm{p}}^T \ \hat{\bm{v}}^T \ \hat{\bm{b}}^T \ \hat{\bm{a}}^T]^T \in \mathbb{R}^{12}$. The full observer's vector state is $\hat{\bm{\xi}} = [\hat{\bm{p}}^T \ \hat{\bm{v}}^T \ \hat{\bm{b}}^T \ \hat{\bm{a}}^T \ \hat{\bm{q}}^T \ \hat{\bm{\omega}}]^T \in\mathbb{R}^{19}$, and it is expressed in the global frame. Furthermore, $\hat{\mathcal{C}} \triangleq \{ \hat{\xi}\in\mathbb{R}^{19} , \tau \in [0,\ \Delta t_d)\}$ and 
$\hat{\mathcal{D}} \triangleq \{ \hat{\bm{\xi}}\in\mathbb{R}^{19} , \tau \geq \Delta t_d\}$. Without loss of generality, the timer variable  $\tau$ is considered the same for all hybrid systems. The scalar $\alpha$ describes a linear filter. The mappings $(g(\cdot),\rho(\cdot))$, and the constant matrices $\text{\textbf{K}}_i \in \mathbb{R}^{3\times 3}$ will be detailed later. Before that, $(\bm{\nu}_f,\bm{\nu}_g)$ signals are defined in order to rewrite $(\hat{\bm{a}},\hat{\bm{\omega}})$ as

\vspace{-5pt}
\begin{equation}
    \label{expset:design:eqn:vfvg}
    \begin{aligned}
        \hat{\bm{a}} &= \mathcal{R}_{\mathcal{L}}^{\mathcal{G}}\bm{u}_a + \bm{b} + \bm{\nu}_f \ , \\
        \hat{\bm{\omega}} &=\mathcal{R}_{\mathcal{L}}^{\mathcal{G}}\bm{\omega} + \bm{\nu}_g \ . \\
    \end{aligned}
\end{equation}

Specifically, $(\bm{\nu}_f,\bm{\nu}_g)$ take into account the mismatch between $\bm{a}$
and $\hat{\bm{a}}$, and $\bm{\omega}$
and $\hat{\bm{\omega}}$, introduced by the linear dynamics in  
\eqref{expset:design:eqn:plant_dyn}-\eqref{expset:design:eqn:obs_dyn_hybrid} and the Gaussian noise measurement $(\bm{\nu}_a,\bm{\nu}_\omega)$ in  \eqref{expset:design:eqn:plant_dyn_output}. As $(\hat{\bm{a}},\hat{\bm{\omega}})$ are expressed in the global reference frame, the injection terms in the observer will use $(\bm{y}_a^{\mathcal{G}},\bm{y}_\omega^{\mathcal{G}})$.

\subsection{The position trilateration}
\label{expet:gmap}
The terms introduced in the observer structure are now investigated, starting from the nonlinear map $g(\cdot):\mathbb{R}^{3N}\rightarrow \mathbb{R}^3$,
which represents the output of a trilateration algorithm providing an estimate of the rover's absolute position $\bar{\bm{p}}$ exploiting the ranging measurements. The trilateration problem is defined as

\vspace{-5pt}
\small
\begin{subequations}
    \begin{equation}
        \{\bar{\bm{p}}_{T,z}\} = 
        \begin{aligned}        	
    		  \underset{\{\bm{p}_{T,z}\}}{\arg\min}
    		  \quad \underbrace{\sum\limits_{i=1}^N\sum\limits_{z=1}^3(\norm{\bm{p}_{A,i}-\bm{p}_{T,z}}-d_{i,z})^2}_{\text{J}} \ ,
        \end{aligned}         
        \label{expset:design:eqn:pd_optproblem}
    \end{equation}    
    \begin{equation}
        \bar{\bm{p}} = \dfrac{1}{3}\sum\limits_{z=1}^3 \bar{\bm{p}}_{T,z} \ .     
        \label{expset:design:eqn:pd_average}        
    \end{equation}    
\end{subequations}
\normalsize

\medskip
Indeed, the solution to  \eqref{expset:design:eqn:pd_optproblem} can be computed either in closed form or numerically \cite{Zhou2009}. The Newton-Raphson algorithm efficiently solves the trilateration problem with local quadratic convergence to (one of the) minima of $\text{J}$. A single-step evaluation at the i-th iteration is

\vspace{-10pt}
\begin{equation}
   \begin{aligned}
   \bm{p}_{T,z,i+1} = \bm{p}_{T,z,i} - \big(H_\text{J}(\bm{p}_{T,z,i})\big)^{-1} \nabla\text{J}(\bm{p}_{T,z,i}) \ ,
   \end{aligned}
   \label{expset:design:eqn:eqn:Newton-Rapson}
\end{equation}

\medskip
where $H_\text{J}$ is the Hessian matrix of $\text{J}$ and $\nabla \text{J}$ its gradient, both calculated with respect to $\bm{p}_{T,z}$. The Hessian needs to be inverted over the trajectories considered. It can be easily shown that the singularity condition of the Hessian occurs when the UWB tags reach exactly the location of any fixed anchors, namely $\bm{p}_{T,z}=\bm{p}_A$, which never happens in this work. In this experimental setup, iterating no more than five times, the Newton-Raphson algorithm from  \eqref{expset:design:eqn:eqn:Newton-Rapson} yields an accurate, although approximated, solution of problem \eqref{expset:design:eqn:pd_optproblem} for each UWB tag, which will be referred to as $\bar{\bm{p}}_{T,z,5}$, with $z\in\{1,2,3\}$. The average of these values as for   \eqref{expset:design:eqn:pd_average} will be $\bar{\bm{p}}_5$, on which the following assumption is based:

\begin{assumption}[Newton-Rhapson convergence]
 Let $ \bar{\bm{p}}_5\in \Omega_p\subset\mathbb{R}^3$, with $\Omega_p$ and $\bm{\nu}_d$  bounded. Define $g(\bm{y}_d)\triangleq\bar{\bm{p}}_5$, where $\bar{\bm{p}_5}$ is obtained from  \eqref{expset:design:eqn:pd_average}. Furthermore,  $\{\bar{\bm{p}}_{T,z,5}\}$ is obtained after five iterations of the Newton-Raphson algorithm from  \eqref{expset:design:eqn:eqn:Newton-Rapson},  with initial conditions $\{\bar{\bm{p}}_{T,z,0}\}$. Then,

    \vspace{-10pt}
    \begin{equation}
        \bar{\bm{p}}_5 =  \bm{p} + \bm{\nu}_J(\bm{\nu}_{d}) \ ,
        \label{expset:design:eqn:pj_noise}
    \end{equation}

holds true for any $\{\bar{\bm{p}}_{T,z,0}\}\in \Omega_p$, with a bounded function 
$\bm{\nu}_J:\mathbb{R}^{3N}\rightarrow\mathbb{R}^3$ that takes into account 
the uncertainties introduced by the UWB measurement noise $\bm{\nu}_d$, and the approximate
solution of  \eqref{expset:design:eqn:pd_optproblem}.
\label{expset:design:ass:pJ_noise}
\end{assumption}

In light of this, it is now clear that the term $(g(\bm{y}_d) - \hat{\bm{p}})$ represents a mismatch between the position estimate from the trilateration and the current position estimate of the observer. The static gain $\text{\textbf{K}} \triangleq \begin{bmatrix} \text{\textbf{K}}_1 \ \text{\textbf{K}}_2 \ \text{\textbf{K}}_3 \end{bmatrix}^T \in\mathbb{R}^{9 \times 3}$, together with the output mismatch, act as output injection terms of the observer through $\tilde{\text{\textbf{K}}}\in\mathbb{R}^{12 \times 3}$, with $\tilde{\text{\textbf{K}}} = \begin{bmatrix} \text{\textbf{K}}^T \ 0_{3 \times 3} \end{bmatrix}^T$. Position correction occurs only on the jump map when UWB measurements are collected. 

\subsection{Computing the orientation} 
\label{expset:rho}
The orientation estimation is now considered. The injection term is designed here as a nonlinear mapping $\rho(\hat{\bm{q}},\bm{y}_d,\text{\textbf{K}}_4)$, and it depends on the UWB measurements $\bm{y}_d$, the current quaternion estimate $\hat{\bm{q}}$, and a static gain $\text{\textbf{K}}_4$, to be designed:

\vspace{-5pt}\small
\begin{equation}
    \label{expset:design:eqn:injtermQ}
    \rho(\hat{\bm{q}},\bm{y}_d,\text{\textbf{K}}_4) \triangleq \text{A2Q}\bigg(\text{\textbf{K}}_4(\delta(\bm{y}_d)-\text{Q2A}(\hat{\bm{q}}))\bigg) \ .
\end{equation}
\normalsize

Here, A2Q, Q2A, and $\delta$ are these nonlinear mappings:

\begin{itemize}
    \item \textbf{Q2A} $(\mathbb{R}^{4} \rightarrow \mathbb{R}^{3})$: identifies the mapping from the quaternion space to Euler angles, namely the rotation around the X-axis (\textit{Roll} angle $\zeta$), the Y-axis (\textit{Pitch} angle $\theta$), and the Z-axis (\textit{Yaw} angle $\varphi$),

    \item \textbf{A2Q} $(\mathbb{R}^{3} \rightarrow \mathbb{R}^{4})$: identifies the mapping from Euler angles to the quaternion space. Note that A2Q and Q2A are one the inverse of the other, namely $\text{A2Q}(\text{Q2A}(\bm{q})) = \bm{q}$ and $\text{Q2A}(\text{A2Q}(\zeta,\theta,\varphi)) = (\zeta,\theta,\varphi)$. 
    
    \item $\bm{\delta} \ (\mathbb{R}^{9}\times\mathbb{R}^{9} \rightarrow \mathbb{R}^{3})$: identifies the mapping from three non-collinear points of a rigid body to its orientation expressed in Euler angles. As previously introduced, the orientation of a rigid body in space can be determined if the coordinates of at least three non-collinear points placed on it are known. Specifically, this task can be described as a \href{https://simonensemble.github.io/posts/2018-10-27-orthogonal-procrustes/}{Procrustes problem}\cite{Procrustes} and consists of the equivalent of the trilateration problem but for the orientation. The $\delta(\bm{y}_d)$ term solves it, and assumes the following:

    \vspace{-10pt}
   \begin{assumption}[Procrustes convergence]
       \label{expset:design:ass:noisetrilateration}
       Let $\bm{p}\in\Sigma_p\subset\mathbb{R}^3$ and $\bm{q}\in\Sigma_q\subset\mathbb{R}^4$ be the true position and orientation of the nominal plant $\mathcal{P}_h$. Let the measurement noise $\bm{\nu}_d$ be bounded on the UWB measurements. Define $g(\bm{y}_d) \triangleq \bar{\bm{p}}_5$ as from \cref{expset:design:ass:pJ_noise}. Define also the orientation estimate $\bm{q}_5$ provided by the orientation trilateration $\delta(\bm{y}_d)$. The following is assumed:

        \vspace{-10pt}
        \begin{subequations}
            \begin{equation}                
                \bar{\bm{p}}_5 = \bm{p} + \bm{\nu}_p(\bm{\nu}_d )\ ,  
            \end{equation}  
            \vspace{-25pt}
            \begin{equation}
                \bar{\bm{q}}_5 = \bm{q} + \bm{\nu}_q(\bm{\nu}_d) \ ,
            \end{equation}
            \label{expset:design:eqn:asstrilateration}
        \end{subequations}

        \vspace{-5pt}
        with $(\bm{\nu}_p) \ : \ \mathbb{R}^{3N} \rightarrow \mathbb{R}^3$ and $(\bm{\nu}_q) \ : \ \mathbb{R}^{3N} \rightarrow \mathbb{R}^4$ bounded functions where $\bm{\nu}_p(0) = 0$ and $\bm{\nu}_q(0) = 0$.
   \end{assumption}
   
   According to this last assumption, the injection term from  \eqref{expset:design:eqn:injtermQ} depends on $(\text{Q2A}(\bm{q} + \bm{\nu}_q(\bm{\nu}_d))\ - \ \text{Q2A}(\hat{\bm{q}}))$ which represents a mismatch between the orientation obtained from the UWB measurements and the orientation estimate.

   \begin{remark}
       \label{expset:design:rem:rhozeroineqzero}
       If no measurement noise is present, i.e., $\bm{v}_q(\bm{v}_d) = 0$, $\rho(\hat{\bm{q}},\bm{y}_d,\text{\textbf{K}}_4)$ results zero in $\bm{q} = \hat{\bm{q}}$.
   \end{remark}      
   Thus, similarly to the position, the static gain $\text{\textbf{K}}_4 \in \mathbb{R}^{3\times 3}$ defines the injection term, which again belongs to the jump map. The correction in Euler Angles reads

    \small
    \vspace{-10pt}
   \begin{align}
        \label{expset:design:eqn:injtermQTrilater}
        &\hspace{-30pt}\rho(\hat{\bm{q}},\bm{y}_d,\text{\textbf{K}}_4) \triangleq \text{A2Q}(\text{\textbf{K}}_4(\text{Q2A}(\bm{q} + \bm{\nu}_q(\bm{\nu}_d)) - \text{Q2A}(\hat{\bm{q}}))).
    \end{align}
    \normalsize
\end{itemize}

It is important to note that the rover orientation returned by this procedure is used to define the rotation matrix from the rover's local reference frame to the global reference frame, i.e., $\mathcal{R}_\mathcal{L}^\mathcal{G}$. This is how the IMU measurements are transformed into the global reference frame every time they are collected, namely $\bm{y}_a^\mathcal{G} = \mathcal{R}_\mathcal{L}^\mathcal{G}\bm{y}_a$, and $\bm{y}_\omega^\mathcal{G} = \mathcal{R}_\mathcal{L}^\mathcal{G}\bm{y}_\omega$.

\medskip
The observer keeps its hybrid structure with the correction terms in the jump map. Following the notation introduced in \cref{meth} for the TBOD method, the design parameters of the observer are $\bm{\Delta} = \{\text{\textbf{K}}_1 \ \text{\textbf{K}}_2 \ \text{\textbf{K}}_3 \ \text{\textbf{K}}_4\}$, and $\bm{\Gamma} = \emptyset$. More specifically, the set of parameters boils down to six scalar variables, i.e., $\{k_i\} \text{ s.t. } i\in\{1,\dots,6\}$:

\begin{equation}
    \label{expset:design:gainstruct}
    \text{\textbf{K}}_i = 
    \begin{cases}
        k_i \text{\textbf{I}}_3 \quad  &\in \mathbb{R}^{3\times 3} \ \forall i\in \{1,2,3\} \\
        \\
        \begin{bmatrix}
            k_4 &  0 & 0 \\
            0 & k_5 & 0 \\
            0 & 0 & k_6
        \end{bmatrix}  \quad &\in \mathbb{R}^{3\times 3} \ \forall i\in \{4,5,6\}
    \end{cases}.
\end{equation}

The gains on the position mismatch are used for the injection terms of the position, velocity, and acceleration bias estimate. A three-dimensional vector describes each of these. Instead, the orientation gains are only used to update the quaternion estimate, resulting in a scalar term for each related Euler angle. To conclude this section, the estimation error from $\mathcal{P}_h$ and $\hat{\mathcal{P}_h}$ is defined as

\begin{equation}
    \label{expset:design:eqn:errdef}
    \bm{e} \triangleq 
    \begin{bmatrix}
        \bm{p} - \hat{\bm{p}} \\
        \bm{v} - \hat{\bm{v}} \\
        \bm{b} - \hat{\bm{b}} \\
        \bm{q}^{-1} \otimes \hat{\bm{q}} \\
    \end{bmatrix}
    = 
    \begin{bmatrix}
        \bm{e}_p \\
        \bm{e}_q
    \end{bmatrix}
    \in\mathbb{R}^{13},
\end{equation}

\smallskip
where the terms $(\bm{e}_p,\bm{e}_q)$ are the position and orientation errors, respectively. Considering a general quaternion with the scalar term as the last, the quaternion error is defined following  \cite{Challa2016}, exploiting the inversion and product operations. Specifically, the product between $(\bm{q},\hat{\bm{q}})$ is defined as $\bm{q}\otimes\hat{\bm{q}} = M(\bm{q})\hat{\bm{q}}$, with $M(\bm{q})$ is a skew-symmetric matrix (see \cite{Challa2016}). It is easier to interpret the orientation estimation error with Euler angles; thus, the quaternion error is converted into Euler angles through Q2A in the results section. 

\subsection{Stability analysis}
\label{expset:stab}

The stability of the estimation error is here analyzed when the observer $\hat{\mathcal{P}}_h$ in \cref{expset:design:eqn:obs_dyn_hybrid} is used to track the plant $\mathcal{P}_h$ in \eqref{expset:design:eqn:plant_dyn_hybrid}. Specifically, as the stability of a hybrid system is considered, the concept of \textit{Uniformly Ultimately Boundedness} from \cite{Goebel2009} will be exploited. Similarly to the position-only case studied in \cite{Oliva2023}, the estimation error's dynamics are:

\small
\begin{equation}
    \label{expset:eqn:errDyn}
    \mathcal{E}_h = 
    \begin{aligned}
    &
        \begin{cases}
            \dot{\bm{e}}_p &= 
            \begin{bmatrix}
                \dot{\bm{p}} - \dot{\hat{\bm{p}}} \\
                \dot{\bm{v}} - \dot{\hat{\bm{v}}} \\
                \dot{\bm{b}} - \dot{\hat{\bm{b}}} \\
            \end{bmatrix} =
            A_e\bm{e}_p + B_e\bm{\nu}_f\\
            \dot{\bm{e}}_q &= \dot{M}(\bm{q}^{-1})\hat{\bm{q}} + M(\bm{q}^{-1})\dot{\hat{\bm{q}}}
        \end{cases}, \\ & \hspace{150pt}
        (\bm{e},\tau)\in\mathcal{C}_e \ , \\
        & 
        \begin{cases}
            \bm{e}_p^+ &= 
            \begin{bmatrix}
                \bm{p}^+ - \hat{\bm{p}}^+ \\
                \bm{v}^+ - \hat{\bm{v}}^+ \\
                \bm{b}^+ - \hat{\bm{b}}^+ \\
            \end{bmatrix}
            = (\text{\textbf{I}} - \bm{\Gamma}_K)\bm{e}_p + \text{\textbf{K}}\bm{\nu}_f \\
            \bm{e}_q^+ &= 
                (\bm{q}^+)^{-1}\otimes\hat{\bm{q}}^+            
             = \bm{e}_q + M(\bm{q}^{-1})\rho(\hat{\bm{q}},\bm{y}_d,\text{\textbf{K}}_4)
        \end{cases}, \\ & \hspace{150pt} (\bm{e},\tau)\in\mathcal{D}_e \ ,
    \end{aligned}
\end{equation}
\normalsize

where $\mathcal{C}_e \triangleq \{ \bm{e}\in\mathbb{R}^{13} , \tau \in [0,\ \Delta t_d]\}$ and 
$\mathcal{D}_e \triangleq \{ \bm{e}\in\mathbb{R}^{13} , \tau \geq \Delta t_d\}$. The proof of the following result regarding the stability of the error dynamics in \eqref{expset:eqn:errDyn} is now provided. 

\begin{theorem}[Full observer - stability]
\label{expset:the:boundedness}
Consider the hybrid plant $\mathcal{P}_h$ in  \eqref{expset:design:eqn:plant_dyn_hybrid} and observer in  \eqref{expset:design:eqn:obs_dyn_hybrid}, where assumption \ref{expset:design:ass:noisetrilateration} holds. Consider the noise-free scenario, i.e., $(\bm{\nu}_a, \bm{\nu}_\omega, \bm{\nu}_d) = 0$. Assume that there exists a constant $\overline{M}\in\mathbb{R}_{>0}$ such that:

\vspace{-5pt}
\begin{equation}
    \label{expset:eqn:theoassumptionMoverline}
    \max\{ \norm{u_a}, \norm{u_\omega}, \norm{\dot{u}_a}, \norm{\dot{u}_\omega} \} \leq \overline{M} \ .
\end{equation}

Then, the origin of the estimation error system $\mathcal{E}_h$ in  \eqref{expset:eqn:errDyn} is Uniformly Globally pre-Asymptotically stable\footnote{The analog of the GAS property for hybrid systems.}\cite{Goebel2009}.
    
\end{theorem}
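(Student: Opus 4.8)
The plan is to exploit the cascade (triangular) structure of the error system $\mathcal{E}_h$ in \eqref{expset:eqn:errDyn}: the orientation error $\bm{e}_q$ together with the angular-velocity filter error evolves autonomously (neither the flow map, the quaternion jump, nor the Procrustes term $\rho$ depends on the position states), whereas the position error $\bm{e}_p$ is driven only through the perturbation $\bm{\nu}_f$, which by \eqref{expset:design:eqn:vfvg} is governed by the orientation error and the acceleration-filter transient. I would therefore prove uniform global pre-asymptotic stability (UGpAS) in three stages---first the orientation subsystem, then an input-to-state (ISS) bound coupling $\bm{\nu}_f$ to the orientation error, and finally the position subsystem---and close with a hybrid ISS cascade argument from \cite{Goebel2009}. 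Throughout, the noise-free hypothesis together with assumption \ref{expset:design:ass:noisetrilateration} gives $g(\bm{y}_d)=\bm{p}$ and $\bm{\nu}_q=0$, so by Remark \ref{expset:design:rem:rhozeroineqzero} the jump corrections vanish at $\bm{q}=\hat{\bm{q}}$.

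For the \textbf{orientation subsystem}, I would first treat the filter $\dot{\hat{\bm{\omega}}} = \alpha(\bm{y}_\omega^{\mathcal{G}} - \hat{\bm{\omega}})$ as a stable linear low-pass whose tracking error is ISS with respect to $\dot{\bm{\omega}}$; boundedness of the latter follows from the theorem's hypothesis \eqref{expset:eqn:theoassumptionMoverline}. During flow, $\dot{\bm{e}}_q = \dot{M}(\bm{q}^{-1})\hat{\bm{q}} + M(\bm{q}^{-1})\dot{\hat{\bm{q}}}$ depends on the mismatch $\hat{\bm{\omega}} - \bm{\omega}$, so the quaternion error grows at most boundedly over each interval of length $\Delta t_d$. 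The contraction must come from the jump $\bm{e}_q^+ = \bm{e}_q + M(\bm{q}^{-1})\rho(\hat{\bm{q}},\bm{y}_d,\text{\textbf{K}}_4)$: using Remark \ref{expset:design:rem:rhozeroineqzero} and the smoothness of \textup{Q2A}/\textup{A2Q}, the correction is to first order $-\text{\textbf{K}}_4$ times the Euler-angle error, so a diagonal $\text{\textbf{K}}_4$ with spectrum in $(0,2)$ makes the jump a strict contraction near the origin. A Lyapunov function on the quaternion-error manifold that decreases across each flow-jump cycle---provided the jump contraction set by $\text{\textbf{K}}_4$ dominates the flow growth accrued over $\Delta t_d$---then yields UGpAS of the orientation error.

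For the \textbf{position subsystem}, the crucial observation is that $A_e$ is nilpotent, so flow alone cannot stabilize $\bm{e}_p$; stability is produced entirely by the jumps. Over one cycle the nominal ($\bm{\nu}_f=0$) error obeys the discrete recursion $\bm{e}_p(t_{j+1}) = (\text{\textbf{I}} - \bm{\Gamma}_K)\,e^{A_e\Delta t_d}\,\bm{e}_p(t_j)$, and since $e^{A_e\Delta t_d}$ is a finite polynomial in $\Delta t_d$, I would choose the gains $(\text{\textbf{K}}_1,\text{\textbf{K}}_2,\text{\textbf{K}}_3)$ so that this monodromy matrix is Schur---a standard deadbeat-type assignment for a sampled chain of integrators with intermittent output correction. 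This gives global exponential stability of the nominal discrete dynamics, hence ISS of $\bm{e}_p$ with respect to $\bm{\nu}_f$ for the hybrid flow-jump system. Finally I would bound $\bm{\nu}_f$: using \eqref{expset:design:eqn:vfvg}, the Lipschitz dependence of $\mathcal{R}_{\mathcal{L}}^{\mathcal{G}}$ on the orientation, and the bound \eqref{expset:eqn:theoassumptionMoverline}, one obtains $\norm{\bm{\nu}_f} \le \gamma(\norm{\bm{e}_q}) + (\text{filter transient})$, so $\bm{\nu}_f$ is an ISS input vanishing with the orientation error.

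Composing these pieces via the cascade argument finishes the proof: the orientation error is UGpAS, this drives $\bm{\nu}_f$ to zero, and the ISS position subsystem then drives $\bm{e}_p$ to the origin, so the full error $\bm{e}=(\bm{e}_p,\bm{e}_q)$ is UGpAS. Persistence of the corrective jumps---guaranteed by the timer $\tau$ resetting every $\Delta t_d$, which keeps the hybrid time domain unbounded in the jump index---is essential throughout. I expect the \textbf{orientation subsystem} to be the main obstacle: unlike the linear position part, it requires a genuinely nonlinear contraction estimate for the jump through the Euler-angle conversions, a Lyapunov certificate on the quaternion-error manifold, and a careful balancing of the flow growth over $\Delta t_d$ against the jump contraction induced by $\text{\textbf{K}}_4$, all complicated by the internal feedback of the orientation error into $\hat{\bm{\omega}}$ through $\mathcal{R}_{\mathcal{L}}^{\mathcal{G}}$.
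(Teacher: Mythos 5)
Your skeleton matches the paper's: the same orientation-then-position decomposition coupled through $\mathcal{R}_{\mathcal{L}}^{\mathcal{G}}$, the same boundedness of $(\hat{\bm{a}},\hat{\bm{\omega}})$ from \eqref{expset:eqn:theoassumptionMoverline}, a quadratic Lyapunov function on $\bm{e}_q$ with bounded growth over each flow interval of length $\Delta t_d$, and a jump that must contract enough to dominate that growth --- the paper's condition \eqref{expset:eqn:LyapFunCond}, $(1+\overline{C}\Delta t_d)^2\lambda_{\max}(\Psi^T\Psi)<1$, is exactly your ``jump contraction beats flow growth'' requirement. Your treatment of the position subsystem (nilpotent $A_e$, monodromy matrix $(\text{\textbf{I}}-\bm{\Gamma}_K)e^{A_e\Delta t_d}$ made Schur by the gains, ISS with respect to $\bm{\nu}_f$) is a more explicit version of what the paper delegates to the earlier position-only result, and your ISS-cascade closure is arguably cleaner than the paper's informal ``time separation principle.''

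The genuine gap is in the orientation step, which you yourself flag as the main obstacle: you propose a first-order contraction argument (jump correction $\approx -\text{\textbf{K}}_4$ times the Euler-angle error, spectrum of $\text{\textbf{K}}_4$ in $(0,2)$), which is inherently \emph{local} --- the A2Q/Q2A conversions are not global diffeomorphisms, so a linearization-based contraction cannot deliver the \emph{Uniformly Globally} pre-Asymptotically Stable claim of the theorem. The paper's decisive observation, which your proposal misses, is that in the noise-free scenario Assumption \ref{expset:design:ass:noisetrilateration} makes the Procrustes solution exact, so choosing $\text{\textbf{K}}_4=\text{\textbf{I}}$ resets $\hat{\bm{q}}^+=\bm{q}$ exactly and hence $\bm{e}_q^+=\bm{0}_q$ after a single jump (equivalently $\lambda_{\max}(\Psi^T\Psi)=0$), regardless of how large $\bm{e}_q$ was. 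This turns the orientation error into a globally finite-time-convergent subsystem, makes condition \eqref{expset:eqn:LyapFunCond} trivially satisfiable, and is precisely what licenses decoupling the position dynamics after one ranging sample. Without this exact-reset argument, your contraction estimate would also have to contend with the persistent drift of $\bm{e}_q$ during flow caused by the non-vanishing filter error $\hat{\bm{\omega}}-\bm{\omega}$ (which is only ultimately bounded, not convergent, since $\dot{\bm{\omega}}\neq 0$), so the flow-jump balance alone would at best yield practical, not asymptotic, stability of the orientation error.
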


\begin{proof}
    
    The boundedness assumption in  \eqref{expset:eqn:theoassumptionMoverline} implies $(\bm{y}_\omega,\bm{y}_a)$ to be bounded. Hence, also $(\hat{\bm{\omega}},\hat{\bm{a}})$ are bounded as they are states of BIBO systems, fed by $(\bm{y}_\omega,\bm{y}_a)$. $(\bm{\omega},\bm{a})$ are bounded too as per their definition in \eqref{expset:design:eqn:plant_dyn}.

    \medskip
    Let's focus now on the dynamics of $\bm{e}_q$ defined in \eqref{expset:eqn:errDyn}, and first show that this subsystem is \textit{Uniformly Globally pre-Asymptotically Stable} (UGpAS) \cite{Goebel2009} in the nominal case, i.e., without measurement noise  $(\bm{\nu}_a,\bm{\nu}_\omega,\bm{\nu}_d) = 0$. The set of interest is $\mathcal{A} \triangleq \{\bm{e}_q\in\mathbb{R}^4,\bm{e}_p\in\mathbb{R}^9 \ | \ \bm{e}_q = \bm{0}_q, \bm{e}_p = \bm{0}\}$, namely the origin of the error dynamics \eqref{expset:eqn:errDyn}, where $\bm{0}_q = [0,0,0,1]^T$. Consider the quadratic form

    \vspace{-5pt}\small
    \begin{align}
        \label{expset:eqn:LyapFun}
        V(\bm{e}_q) &\triangleq \bm{e}_q^{T}\bm{e}_q,
    \end{align}
    \normalsize
    
    such that $V(\bm{e}_q) > 0 \ \forall \bm{e}_q\neq 0$, and $V(\bm{e}_q) = 0$ for $\bm{e}_q = \bm{0}_q$. As both $(\bm{\omega},\hat{\bm{\omega}})$ are bounded, also the quaternions $(\bm{q},\hat{\bm{q}})$ are bounded. Indeed, the orientation error is also limited, specifically $\norm{\bm{e}_q} \leq 1$, as it is a quaternion. Let's now check the boundedness of $\dot{V}$ in the first flow time interval $[0, \ t]$. From \eqref{expset:eqn:errDyn} it holds that

    \vspace{-5pt}\small
    \begin{equation}
        \label{expset:eqn:eqdotLimit}
        \norm{\dot{\bm{e}}_q} = \norm{\dot{M}(\bm{q}^{-1})\hat{\bm{q}} + M(\bm{q}^{-1})\dot{\hat{\bm{q}}}} \leq \overline{C} > 0 \ ,
    \end{equation}
    \normalsize
    
    as $(M,\bm{q},\hat{\bm{q}})$ are all bounded. Recalling that $\norm{\bm{e}_{q,0}} = 1$, within the flow time interval $[0, t]$ the following holds:

    \vspace{-5pt}
    \small
    \begin{align}
        \label{expset:eqn:eqLimit}
        \norm{\bm{e}_q} &\leq \norm{\bm{e}_{q,0}} + \int\limits_0^t\overline{C}d\tau = \norm{\bm{e}_{q,0}} + t\overline{C} \leq \notag \\
        &\leq \norm{\bm{e}_{q,0}}\bigg(1 + \dfrac{t\overline{C}}{\norm{\bm{e}_{q,0}}}\bigg) \leq \norm{\bm{e}_{q,0}}(1 + t\overline{C}).
    \end{align}
    \normalsize
    
    Indeed, this inequality can be used to provide a bound on the Lyapunov function in $[0, \ t]$: 

    \begin{equation}
        \label{expsetup:eqn:LyapBound}
        V(\bm{e}_q) \leq \norm{\bm{e}_q}^2 \leq \norm{\bm{e}_{q,0}}^2(1+\overline{C}t)^2 \ .
    \end{equation}

    The $\dot{V}$ computation in $[0, \ t]$ provides:  

    \begin{align}
    \label{expset:eqn:LyapFunGradFlow}
        \langle\nabla V(\bm{e}_q),\dot{\bm{e}}_q\rangle &= \dot{\bm{e}}_q^T\bm{e}_q + \bm{e}_q^T\dot{\bm{e}}_q \leq  \notag \\
        & \leq 2\overline{C}\norm{\bm{e}_{q,0}}(1 + t\overline{C}) \leq 2\overline{C},
    \end{align}    

    \medskip
    where the last inequality holds since in general $\norm{\bm{e}_{q}} \leq 1$, as highlighted in  \eqref{expset:eqn:eqLimit}. Considering the hybrid error dynamics in  \eqref{expset:eqn:errDyn}, these last considerations can be extended to a generic flow interval $[t_j, \ t_j + \Delta t_d]$. Thus,  \eqref{expset:eqn:LyapFunGradFlow} provides an upper bound for $\dot{V}(\bm{e}_q)$, from which the time evolution of $V(\bm{e}_q)$ cam be computed in the flow time interval $[t_j, \ t_j + \Delta t_d]$, finding an upper bound:
    \begin{align}
      \label{expset:eqn:LyapFunGradFlowLagrange}
        V(\bm{e}_q(t_j+\Delta t_d)) &\leq V(\bm{e}_q(t_j)) + \int\limits_{t_j}^{t_j+\Delta t_d}2\overline{C}d\tau = \notag \\
        &= V(\bm{e}_q(t_j)) + \Delta t_d 2\overline{C} \ .
    \end{align}

    \medskip    
    When a jump occurs, the following holds with $\rho$ from  \eqref{expset:design:eqn:injtermQTrilater}:   

    \begin{equation}
        \label{expset:eqn:LyapFunJumpInjectionNoNoise}
        \bm{e}_q^+ = \bm{e}_q +  M(\bm{q}^{-1})\rho(\bm{e}_q,\bm{y}_d,\text{\textbf{K}}_4).
    \end{equation}
   
    As $\rho \in C^1$ and $\rho(\bm{0},\bm{q},\text{\textbf{K}}_4) = \bm{0}$, then by section 4.3 of \cite{Khalil2014}, there exists a continuous function $\bar{\rho}$ such that 

     \begin{equation}        \label{expset:eqn:LyapFunJumpInjectionNoNoiseLinear}
        \rho(\bm{e}_q,\bm{y}_d,\text{\textbf{K}}_4) = \overline{\rho}(\bm{e}_q,\bm{q},\text{\textbf{K}}_4)\bm{e}_q \text{ with } \overline{\rho}\in\mathbb{R}^{4\times 4} \ ,
    \end{equation}

    \
    which results in 

    \begin{equation}
        \label{expset:eqn:LyapFunJumpInjectionNoNoiseWrap}
        \bm{e}_q^+ = \underbrace{\bigg(\text{\textbf{I}} +  M(\bm{q}^{-1})\overline{\rho}(\bm{e}_q,\bm{q},\text{\textbf{K}}_4) \bigg)}_{\Psi}\bm{e}_q  \ .
    \end{equation}
    
    To complete the stability analysis, consider the difference between $V(\bm{e}_q)$ calculated at the beginning of the general interval $t_j$, and after the jump in $t_j+\Delta t_d$. It holds:

    \small
    \begin{align}
        \label{expset:eqn:LyapFunGradJumpDef}
        &V(\bm{e}_{q,t_j+\Delta t_d}^+) - V(\bm{e}_{q,t_j}) = \notag \\
        &= \bm{e}_{q,t_j+\Delta t_d}^{+^T}\bm{e}_{q,t_j+\Delta t_d}^+ - \bm{e}_{q,t_j}^T\bm{e}_{q,t_j} = \notag \\
        & = \bm{e}_{q,t_j+\Delta t_d}^T\Psi^T\Psi\bm{e}_{q,t_j+\Delta t_d} - \norm{\bm{e}_{q,t_j}}^2 \leq  \notag \\
        &\leq \norm{\bm{e}_{q,t_j+\Delta t_d}}^2\lambda_{\max}(\Psi^T\Psi) - \norm{\bm{e}_{q,t_j}}^2 \leq \notag \\
        &\leq \norm{\bm{e}_{q,t_j}}^2(1+\overline{C}\Delta t_d)^2 \lambda_{\max}(\Psi^T\Psi) - \norm{\bm{e}_{q,t_j}}^2 = \notag \\
        &= \norm{\bm{e}_{q,t_j}}^2\bigg((1+\overline{C}\Delta t_d)^2 \lambda_{\max}(\Psi^T\Psi) - 1\bigg).
    \end{align}
    \normalsize

    \medskip
    Thus, it can be concluded that $(\bm{e}_q,\bm{e}^+_q)$ is Uniformly Globally pre-Asymptotically Stable (GAS) as long as it holds

    \begin{equation}
        \label{expset:eqn:LyapFunCond}
        \bigg((1+\overline{C}\Delta t_d)^2 \lambda_{\max}(\Psi^T\Psi) - 1\bigg) < 0 \ .
    \end{equation}

    Indeed, without measurement noise, the orientation obtained by solving the Procrustes problem is exact. Thus, an update gain $\text{\textbf{K}}_4 = \text{\textbf{I}}$ resets the estimated orientation to the true value, i.e., $\hat{\bm{q}}^+ = \bm{q}$, and hence $\bm{e}_q^+ = \bm{0}_q$. More formally, in the presence of noise, $\bm{e}_q^+ \neq  \bm{0}_q$, and hence it holds that

    \vspace{-10pt}
    \begin{align}
        \label{expset:eqn:LyapFunK4Cond}
        \exists \quad &\text{\textbf{K}}_4 \text{  s.t.  } \lambda_{\max}(\Psi^T\Psi)= 0.
    \end{align}
    
    So far, this proof has shown that the error dynamics in  \eqref{expset:eqn:errDyn} evolve with a bounded increase during the flow map (see  \eqref{expset:eqn:LyapFunGradFlowLagrange}), and that there exists an update gain $\text{\textbf{K}}_4$ such that its total decrease after the jump drives the error $\bm{e}_q$ to zero in a single jump. Thus, it can be claimed that $\bm{e}_q = \bm{0}_q$ is UGpAS without measurement noise and that it manifests a \textit{finite-time convergence}, specifically of one sample in this case.

    \medskip
    Moving on with the position error dynamics, note that position and orientation dynamics are coupled through the rotation matrix $\mathcal{R}_{\mathcal{L}}^{\mathcal{G}}$. Considering the \textit{finite-time convergence} property of the orientation error, a reliable estimation of $\mathcal{R}_{\mathcal{L}}^{\mathcal{G}}$ is considered available after one sample of the ranging sensors. Thus, there is a \textit{time separation principle} between the position and orientation error dynamics. Specifically, after one sample, the position error dynamics are decoupled from the orientation error. To conclude, the very same stability proof from \cite{Oliva2023} holds for $\bm{e}_p$, namely the error subsystem considering $(\bm{e}_p,\bm{e}_p^+)$ is also Uniformly Globally pre-Asymptotically Stable, concluding the proof.    
    
\end{proof}

\begin{corollary}[Noise scenario]
    Consider now a non-zero measurement noise, i.e., $(\bm{\nu}_a,\bm{\nu}_\omega,\bm{\nu}_d)\neq 0$. Since all the mappings considered in the error dynamics \eqref{expset:eqn:errDyn} are continuously differentiable, by the robustness property of the Lyapunov stability analysis, it can be stated that there will always be an $\bar{\epsilon} > 0$ small enough such that $(\bm{\nu}_a,\bm{\nu}_\omega,\bm{\nu}_d) < \bar{\epsilon}$ and hence, such that $\mathcal{A}$ is \textit{Uniformly Practically Asymptotically Stable}.
\end{corollary}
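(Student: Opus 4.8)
The plan is to treat the measurement noise $(\bm{\nu}_a,\bm{\nu}_\omega,\bm{\nu}_d)$ as a small, vanishing perturbation of the nominal hybrid error dynamics $\mathcal{E}_h$ in \eqref{expset:eqn:errDyn}, whose origin $\mathcal{A}$ was shown to be UGpAS in Theorem \ref{expset:the:boundedness}, and then to invoke the robustness of UGpAS for well-posed hybrid systems \cite{Goebel2009}. First I would verify that $\mathcal{E}_h$ satisfies the hybrid basic conditions: the flow set $\mathcal{C}_e$ and jump set $\mathcal{D}_e$ are closed (they are cut out by the timer $\tau$), and because every map appearing in the flow and jump equations --- the quaternion kinematics $\bm{\Omega}$, the orientation injection $\rho$ in \eqref{expset:design:eqn:injtermQTrilater}, the linear position update, and $g$ --- is $C^1$, the flow map and jump map are continuous, single-valued, locally bounded, and the flow map is trivially convex-valued. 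Hence $\mathcal{E}_h$ is nominally well-posed, so the nominal UGpAS of $\mathcal{A}$ is robust and is characterized by a class-$\mathcal{KL}$ bound that degrades gracefully under admissible perturbations.

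Next I would make precise how the noise enters as such a perturbation. By \eqref{expset:design:eqn:vfvg} the IMU noise $(\bm{\nu}_a,\bm{\nu}_\omega)$ appears in the flow through $(\bm{\nu}_f,\bm{\nu}_g)$, while by Assumption \ref{expset:design:ass:noisetrilateration} the ranging noise $\bm{\nu}_d$ enters the jump through $\bm{\nu}_p(\bm{\nu}_d)$ and $\bm{\nu}_q(\bm{\nu}_d)$, with the crucial properties that both are bounded and vanish at zero noise, $\bm{\nu}_p(0)=\bm{\nu}_q(0)=0$ (consistently with Remark \ref{expset:design:rem:rhozeroineqzero}, which ensures the orientation injection vanishes at $\bm{q}=\hat{\bm{q}}$ in the noise-free case). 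Consequently the perturbed data differ from the nominal data by a quantity upper bounded by a class-$\mathcal{K}$ function of $\bar{\epsilon}\triangleq\norm{(\bm{\nu}_a,\bm{\nu}_\omega,\bm{\nu}_d)}$ that shrinks to zero as $\bar{\epsilon}\to 0$, which is exactly the admissible vanishing-perturbation structure required by the robustness theorems. I would then recycle the two reductions already carried out in the nominal proof: the quaternion-error Lyapunov function $V(\bm{e}_q)=\bm{e}_q^{T}\bm{e}_q$ of \eqref{expset:eqn:LyapFun} grows by at most $2\overline{C}\Delta t_d$ over a flow interval, as in \eqref{expset:eqn:LyapFunGradFlowLagrange}, and contracts at each jump through $\Psi$ in \eqref{expset:eqn:LyapFunJumpInjectionNoNoiseWrap}. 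With noise the jump no longer resets $\bm{e}_q$ to $\bm{0}_q$ but to a residual of size $O(\norm{\bm{\nu}_q})$, so the net per-cycle change of $V$ stays negative until $\bm{e}_q$ reaches a neighborhood of $\bm{0}_q$ of radius proportional to $\bar{\epsilon}$. The same ISS-type argument on the linear position subsystem --- the jump contraction $(\text{\textbf{I}}-\bm{\Gamma}_K)$ dominating the bounded flow input $B_e\bm{\nu}_f$ and jump input $\text{\textbf{K}}\bm{\nu}_f$ --- drives $\bm{e}_p$ into a residual set of size $O(\bar{\epsilon})$.

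Putting these together, for any compact set of initial conditions and any prescribed target neighborhood of $\mathcal{A}$, there exists $\bar{\epsilon}>0$ such that every noise realization bounded by $\bar{\epsilon}$ renders the trajectories convergent to that neighborhood and forward invariant thereafter, which is precisely Uniform Practical Asymptotic Stability of $\mathcal{A}$. The hard part will be handling the coupling between the two subsystems: in the nominal proof the orientation error collapses in one sample, yielding an exact rotation matrix $\mathcal{R}_{\mathcal{L}}^{\mathcal{G}}$ and the clean \emph{time-separation principle} that decouples $\bm{e}_p$; under noise the orientation only settles into an $O(\bar{\epsilon})$ residual, so $\mathcal{R}_{\mathcal{L}}^{\mathcal{G}}$ is known only up to an error of that order, which feeds back into the position flow through $\bm{\nu}_f$. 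I would close this loop by showing that the extra coupling term is itself $O(\bar{\epsilon})$ and therefore merely enlarges the position residual set by a higher-order amount, so the cascade remains practically stable and the overall residual radius still vanishes as $\bar{\epsilon}\to 0$.
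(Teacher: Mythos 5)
Your proposal is correct and follows essentially the same route as the paper, which justifies the corollary in a single sentence by appealing to the continuous differentiability of the error dynamics in \eqref{expset:eqn:errDyn} and the robustness of the (hybrid) Lyapunov stability established in Theorem \ref{expset:the:boundedness} --- exactly the well-posedness-plus-robust-UGpAS argument from \cite{Goebel2009} that you invoke. Your write-up simply fills in the details the paper leaves implicit (hybrid basic conditions, the vanishing-perturbation structure via $\bm{\nu}_p(0)=\bm{\nu}_q(0)=0$ from Assumption \ref{expset:design:ass:noisetrilateration}, the per-cycle Lyapunov accounting, and the $O(\bar{\epsilon})$ degradation of the time-separation principle), which is a strictly more complete rendering of the same idea.
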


It is important to stress that, now that the convergence of the error dynamics has been proven, \eqref{expset:eqn:LyapFunK4Cond} is not explicitly exploited in the design procedure but rather relies on the TBOD heuristic optimization process. 

\section{Results}
\label{res}

The position and orientation observer designed in \cref{expset} with the TBOD method is now tested and analyzed for its performance compared to a standard EKF.  

\subsection{Hardware and sensor setup} 
\label{res:hw}
In the considered experimental setup, a Jackal UGV \href{https://clearpathrobotics.com/jackal-small-unmanned-ground-vehicle/}{(Clearpath LTD)}\cite{Clearpath} is used. The technical specifications in the datasheet are detailed in \cref{res:tab:jackal}. The following sensors have been used: 

\begin{table}[b]
    \centering
    \begin{tabular}{|c|c|}
        \bottomrule 
        \textbf{Parameter} & \textbf{Value} \\
        \toprule
        \bottomrule
        External dimensions &  508 x 430 x 250 mm \\
        Internal dimensions & 250 x 100 x 85 mm \\
        Weight & 17 Kg \\
        Maximum payload & 20 Kg \\
        Max speed & 2.0 m/s \\
        Run time & 4 hours \\
        \toprule 
    \end{tabular}
    \caption{Technical specifications of the Clearpath Jackal.\vspace{-10pt}}
    \label{res:tab:jackal}
\end{table}

\begin{itemize}
    \item \textbf{IMU}: the acceleration and angular velocity measurements were provided by a \href{https://www.microstrain.com/content/3dm-gx3-25}{3DM-GX3-25 IMU}\cite{MicroStrain}  (MicroStrain\textsuperscript{TM,}). The measurement noise can be modeled as zero-mean Gaussian distributed with $\sigma_\omega = 0.01 \text{ rad/s}$ and $\sigma_a = 0.05 \text{ m/s}^2$ for the angular velocity and acceleration, respectively. An additive bias also affects the acceleration in the order of $0.1 \text{ m/s}^2$. Thus, $\nu_a \in \mathcal{N}(0,\sigma_a) \text{ m/s}^2$, $b\in\mathcal{N}(0,0.1) \text{ m/s}^2$, and $\nu_\omega \in \mathcal{N}(0,\sigma_\omega) \text{ rad/s}$. The sampling time is $\Delta t_a = 0.01s$.
    \item \textbf{UWB}: the distances between the tags and the anchors were provided by the \href{https://www.qorvo.com/products/p/MDEK1001}{MDEK1001}\cite{QorvoMDEK1001} development kit (Qorvo\textsuperscript{TM}), based on the \href{https://www.qorvo.com/products/p/DWM1001-DEV}{DWM1001-DEV} antennas \cite{QorvoDWM1001-DEV}. The noise on the distance measurements can be modeled as zero-mean Gaussian distributed with $\sigma_d = 0.2 \text{ m}$, i.e.,  $\nu_d \in \mathcal{N}(0,\sigma_d) \text{ m}$ The sampling time is $\Delta t_d = 0.05s$.
    \item \textbf{Ground Truth}: A set of six \href{https://www.vicon.com/}{Vicon\textsuperscript{TM}}\cite{Vicon} T10S cameras provide a position and orientation measure of the Jackal with a precision of millimeters. This measure has been considered the ground truth for the UGV and has been exploited in the TBOD design method. The sampling time is $\Delta t_a = 0.01s$, like the IMU. 
\end{itemize}

\begin{figure}[t!]        
        \centering
	\includegraphics[width=6cm]{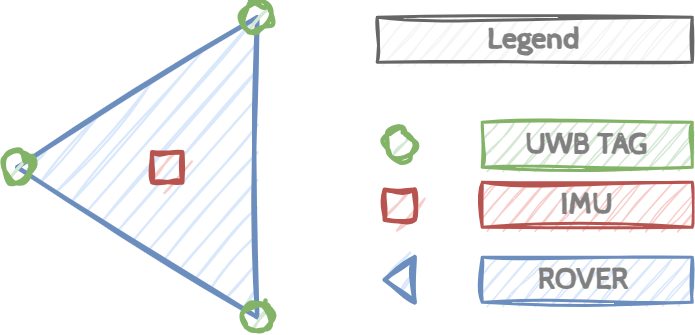}
	\caption{Graphical representation of the rovers' sensor setup.}
	\label{res:fig:roverAbove}
\end{figure}

\begin{table}[!h]
    \centering
    \begin{tabular}{|c|c|c|c|}
        \bottomrule
         \textbf{Anchor} & \textbf{X} & \textbf{Y} & \textbf{Z} \\
         \toprule
         \bottomrule
         A1 & -0.40 m & +4.20 m & +2 m \\
         A2 & -0.40 m & -1.80 m & +2 m \\
         A3 & +2.48 m & -2.20 m & +2 m \\
         A4 & +2.80 m & -4.20 m & +2 m \\
         \toprule 
    \end{tabular}
    \caption{Anchor positioning for the indoor experiment.\vspace{-10pt}}
    \label{res:tab:anchors}
\end{table}

Regarding the setup of the rover, the experiment is similar to that considered in \cite{Oliva2023}, where the anchors are placed as in \cref{res:tab:anchors}. The tags have been placed on an equilateral triangle with a side $20 \text{ cm}$, according to \cref{expset:design:ass:tagpositioning}. A schematic is reported in \cref{res:fig:roverAbove}. As previously introduced, a ground truth system is available for the position and orientation provided by the Vicon cameras. Thus, the optimization problem in  \eqref{meth:eqn:TBOD_prob_def_dist_groundtruth} is considered. An example of two experiment trajectories is reported in \cref{res:fig:rover3D}. Specifically, the red-dashed trajectory belongs to the training set $\mathcal{T}$, while the solid-gray trajectory is one of those used to test the observer's performance. The bottom-right subplot shows the 3D perspective, while the others depict the 2D projections. The blue circles are the anchors, while the grey lines are the rover trajectory. The altitude Z changes are quite small due to space constraints in the Lab. Hence, the orientation changes are mainly in the Yaw angle. Regarding the system architecture, the project was developed on Ubuntu 18.04 and ROS Melodic.

\begin{figure}[t!]        
        \centering         
	\includegraphics[trim={0 20pt 0 50pt},clip,width=0.5\textwidth]{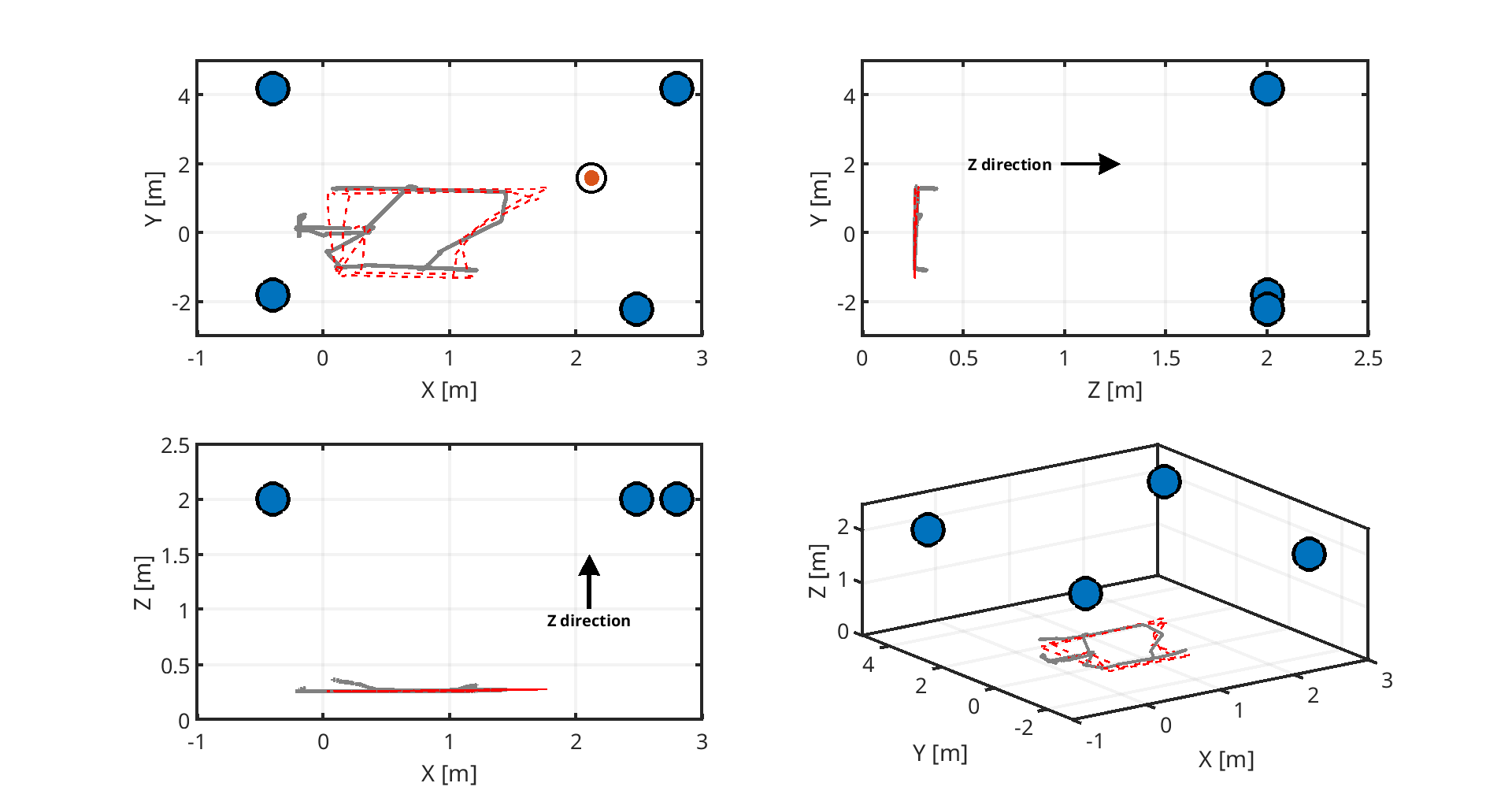}
	\caption{Representation of one of the trajectories used for the tuning (red-dashed) and one of the trajectories used to test the observer performance (solid-gray). The blue-filled circles are the anchors.}\vspace{-10pt}
	\label{res:fig:rover3D}
\end{figure}

\subsection{Tuning and results} 
\label{res:res}
The $\{k_1,\dots,k_6\}$ gains of the hybrid observer in ~\eqref{expset:design:eqn:obs_dyn_hybrid} have been tuned by solving ~\eqref{meth:eqn:TBOD_prob_def_dist_groundtruth} on a set of $M=4$ trajectories. The values obtained are reported in \cref{res:tab:gains}. Notice that the belief in the output of the trilateration and the Procrustes problem solution is pretty high, almost reaching a full reset on the position and orientation. The optimization was performed offline using a dataset of approximately 300 seconds per trajectory, requiring about 10 hours on a standard laptop (Intel i7, 32GB RAM), demonstrating the method's practicality for overnight tuning in typical research or field settings.

\medskip
These values were tested on trajectories different from those used for TBOD calibration, and the performance was compared against a standard EKF and a Particle Filter (PF) with systematic resampling, using 500 and 1000 particles, respectively. The MAE (\textit{Mean Absolute Error}) and RMSE (\textit{Root Mean Square Error}) of estimation error on $x/y/z$ direction and on the yaw angle are reported in \cref{res:tab:ResLab04MAE} and \cref{res:tab:ResLab04RMSE}. Roll and pitch angles are not reported due to negligible variation during the experiment, constrained by the lab setup. Results from a representative trajectory outside of the \textit{training set} $\mathcal{T}$, for the $(X, Y)$ position and Yaw angle are shown in \cref{res:fig:LabResultsLab04}. The PF trajectories have not been plotted for the sake of clarity. From \cref{res:tab:ResLab04MAE} and \cref{res:tab:ResLab04RMSE}, several trends emerge:

\begin{table}[b]
    \centering
    \begin{tabular}{|c|c|}
        \bottomrule
        \textbf{Parameter} & \textbf{Value} \\
        \toprule
         \bottomrule
        $[k_1 \ k_2 \ k_3]$ & [0.91 \ 2.17 \ -0.71]  \\
        $[k_6 \ k_5 \ k_6]$ & [0.98 \ 0.98 \ 0.98] \\
        \toprule 
    \end{tabular}    
    \caption{Hybrid observer gains obtained with the TBOD.\vspace{-10pt}}
    \label{res:tab:gains}
\end{table}

\begin{figure*}[t]
    \centering
    \includegraphics[width=\textwidth]{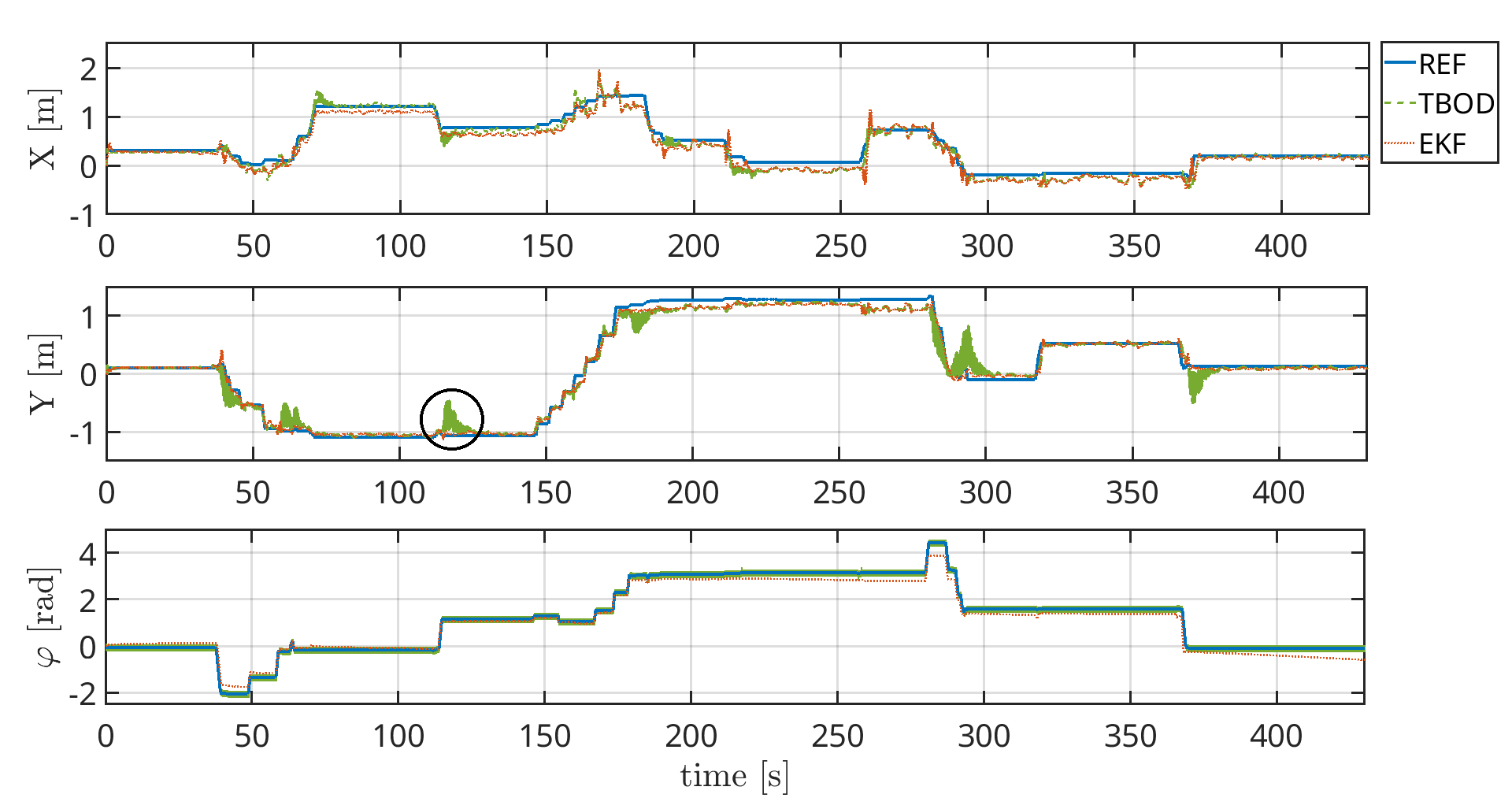}
    \vspace{-10pt}\caption{$(X, Y,\varphi)$ estimation on a test trajectory. The ground truth is in solid blue, the EKF is in dotted red, and the TBOD is in dashed green.}
    \label{res:fig:LabResultsLab04}
\end{figure*}

\begin{itemize}
\item \textbf{Position:} The TBOD and EKF show comparable performance in position estimation. Based on MAE  (\cref{res:tab:ResLab04MAE}), TBOD slightly outperforms EKF in the x-direction ($0.095 m$ vs. $0.117m$, $\simeq19\%$ improvement) and z-direction ($0.127 m$ vs. $0.107m$, though slightly worse), while EKF has an advantage in the y-direction ($0.062m$ vs. $0.081m$, $\simeq 24\%$ better). PF with 1000 particles achieves the lowest MAE in the z-axis $(0.095m)$, but shows higher variability across axes compared to TBOD. PF with 500 particles performs weakest overall, confirming its sensitivity to particle count. Similar trends hold in RMSE (\cref{res:tab:ResLab04RMSE}), where TBOD shows the lowest error in the x-axis $(0.123m)$, competitive performance in y $(0.119m)$, and slightly higher z error $(0.214m)$, still within acceptable bounds.
\item \textbf{Orientation (Yaw):} This is where TBOD yields the most significant advantage. According to MAE, TBOD achieves a yaw error of $0.25^\circ$, compared to $5.3^\circ$ for EKF and $2.12^\circ$ for PF with 1000 particles. This represents an improvement of over $95\%$ vs. EKF and $\simeq 88\%$ vs. PF-N1000. RMSE values show similar results: TBOD ($1.42^\circ$) improves upon EKF ($10.9^\circ$) by $\simeq 87\%$ and PF-N1000 ($2.66^\circ$) by $\simeq 47\%$. These figures underscore TBOD’s robustness and accuracy in orientation estimation, even compared to probabilistic methods with high computational costs.
\end{itemize}

\begin{table}[h!]
    \centering
    \begin{tabular}{|c|c|c|c|c|}         
         \hline
         \textbf{Method} & $\bm{x}$ \textbf{axis} & $\bm{y}$ \textbf{axis} & $\bm{z}$ \textbf{axis} & $\bm{\varphi}$ \textbf{angle}\\
         \toprule
         \bottomrule
         EKF & 0.117 m & 0.062 m & 0.107 m & $5.3^\circ$\\[1pt]
         PF - N500 & 0.171 m & 0.086 m & 0.168 m & $4.55^\circ$\\[1pt] 
         PF - N1000 & 0.115 m & 0.135 m & 0.095 m & $2.12^\circ$\\[1pt] 
         TBOD & 0.095 m & 0.081 m & 0.127 m & $0.25^\circ$\\
         \hline
    \end{tabular}
    \caption{Comparison of the EKF, PF, and TBOD using the MAE.}\vspace{-10pt}
    \label{res:tab:ResLab04MAE}
\end{table}

\begin{table}[h!]
    \centering
    \begin{tabular}{|c|c|c|c|c|}         
         \hline
         \textbf{Method} & $\bm{x}$ \textbf{axis} & $\bm{y}$ \textbf{axis} & $\bm{z}$ \textbf{axis} & $\bm{\varphi}$ \textbf{angle}\\
         \toprule
         \bottomrule
         EKF & 0.139 m & 0.084 m & 0.141 m & $10.9^\circ$\\[1pt]
         PF - N500 & 0.175 m & 0.086 m & 0.168 m & $4.89^\circ$\\[1pt] 
         PF - N1000 & 0.130 m & 0.137 m & 0.095 m & $2.66^\circ$\\[1pt] 
         TBOD & 0.123 m & 0.119 m & 0.214 m & $1.42^\circ$\\
         \hline
    \end{tabular}
    \caption{Comparison of the EKF, PF, and TBOD using the RMSE.}\vspace{-10pt}
    \label{res:tab:ResLab04RMSE}
\end{table}

\medskip
In terms of computational cost, TBOD and EKF exhibit similar real-time performance. In contrast, the PF is significantly more computationally expensive: even with 500 particles (PF-N500), it requires an order of magnitude more processing time than TBOD or EKF. This makes PF less suitable for real-time embedded implementations, unless additional optimization or particle reduction strategies are employed. Thus, for a lightweight real-time solution, EKF and TBOD remain the two best candidates. \cref{res:fig:LabResultsLab04} focuses on the estimation comparison between EKF and TBOD:

\begin{itemize}
    \item \textbf{Position}: coherently with the metrics from \cref{res:tab:ResLab04MAE} and \cref{res:tab:ResLab04RMSE}, the EKF and TBOD estimation are comparable. The chattering observed in the TBOD convergence (e.g., the black circle on the Y axis) illustrates both the hybrid nature of the observer, manifested through periodic position resets every $\Delta t_d$, and the structure of the bias estimation. Specifically, the bias is the only state held constant during the continuous (flow) phase and updated only during discrete jumps, based on UWB measurements. This discontinuous update scheme causes abrupt corrections, which in turn affect the integrated velocity estimate and contribute to the chattering. Introducing a continuous integration term for the bias would likely smooth the estimation, but a simpler structure was chosen during the observer design phase. As for convergence behavior, a lower $k_3$ leads to slower yet smoother convergence, whereas a more aggressive setting (higher $k_3$) improves convergence speed at the cost of increased noise. This tradeoff is precisely what the optimization procedure aims to balance.
    \item \textbf{Orientation (Yaw)}: the high level of trust in the position estimate (i.e., high $k_{4-6}$ values) results in a very accurate bearing estimate by the TBOD, with the EKF performing poorly, manifesting drifts and inaccurate values. 
\end{itemize}

\medskip
In summary, TBOD offers a strong trade-off: it achieves comparable or better position accuracy than EKF and PF while significantly improving yaw estimation and maintaining low computational overhead. These considerations hold for both the MAE and RMSE metrics, showing how TBOD is consistently more accurate, both on average and in worst-case deviations.

\section{Conclusions}
\label{concl}

This work proposes a methodology for observer design called \textit{Trajectory Based Optimization Design} (TBOD). The goal is to provide a robust design procedure to design and tune lightweight and modular observers for general nonlinear systems (see \cref{meth}). An implementation example is proposed for the localization problem of a terrestrial rover in \cref{expset}, where stability guarantees for the error convergence are also provided. The observer is tested in a real scenario against a standard EKF and PF in \cref{res}, matching its accuracy on the position estimate, and improving it by an order of magnitude in the orientation. 

\medskip
In this regard, the TBOD method represents a valid approach for designing sensor-fusion solutions. Specifically, the TBOD method can design different observers for different sensor setups and noise situations. These observers can be later alternated depending on the current estimation scenario. Future work in this direction is being developed, and we hope to provide positive results soon. 

\printcredits

\bibliographystyle{elsarticle-num}

\bibliography{ref}

\vfill\null
\vspace{100pt}
\bio{fig/people/federico}
Federico Oliva is a postdoctoral researcher at the Civil, Environmental, and Agricultural Robotics Lab (CEAR) at the Technion – Israel Institute of Technology, focusing on multi-agent active localization and robotic aggregate formation techniques. He holds a Master's in Automation Engineering from Alma Mater Studiorum – Università di Bologna (2017-2020), where he graduated with honors. He completed his Ph.D. at Università di Roma Tor Vergata (2020-2023) in Computer Science, Control, and Geoinformation, focusing on nonlinear observers and system identification.
\endbio

\bio{fig/people/tom}
Tom Shaked is a registered architect with a B.Arch and M.A. from Tel Aviv University, where he lectured and managed the Digital Fabrication Lab. A two-time Azrieli research fellow, he co-founded CTwiz, a real estate analytics startup, and was Head of Education at Infolio (2014-2016). He earned his Ph.D. from the Technion, focusing on robotic tools for adaptive construction. Tom has held research and teaching roles at TUM and the Technion's CEAR Lab. Now a Senior Lecturer at Ariel University, he directs ARCA Lab, researching collective robotic construction and autonomous systems in unstructured environments.
\endbio

\bio{fig/people/daniele}
Daniele Carnevale was born in Italy in 1978. He received an M.S. degree in electrical engineering and a Ph.D. in robotics applied to surgery from the university of Rome “Tor Vergata,” Rome, Italy, in 2003 and 2007, respectively.
He was a Visiting Scholar at the Center for Control Engineering and Computation, university of California, Santa Barbara 2005 and with the Electrical and Electronic Engineering Department, Imperial College, London, U.K., in 2006. He is currently an Associate Professor with the university of Rome “Tor Vergata”. His research interests are focused on nonlinear observer design, mathematical modeling of hysteresis, extremum seeking, and networked controlled systems.
\endbio

\bio{fig/people/amir}
Amir Degani is an Associate Professor at the Technion – Israel Institute of Technology and Director of the CEAR Lab. He earned his B.Sc. in Mechanical Engineering from the Technion (2002, summa cum laude) and his M.S. and Ph.D. in Robotics from Carnegie Mellon University (2006, 2010). His research focuses on mechanism design, motion planning, and nonlinear dynamic hybrid systems with civil and agricultural robotics applications. Prof. Degani holds six robotics patents and has received several awards, including IEEE Best Paper and Best Video honors. He has also been an associate editor for IEEE T-RO, ICRA, and IROS.
\endbio

\end{document}